\documentclass[11pt]{article} 

\usepackage{fullpage}
\usepackage[T1]{fontenc}
\usepackage{graphicx}

\usepackage{epstopdf}
\usepackage{amsmath}
\usepackage{amsthm}
\usepackage{algcompatible}
\usepackage{amssymb}
\usepackage{enumerate}

\usepackage{algorithm}
\usepackage{algpseudocode}
\usepackage{mathtools, cuted}
\usepackage{breqn}
\usepackage{bbm}
\usepackage{cite}

\graphicspath{{./figures/}}
\usepackage{pdfpages}
\usepackage{chngcntr}

\newtheorem{assumption}{\bf Assumption}
\newtheorem{remark}{\bf Remark}

\newtheorem{theorem}{\bf Theorem}
\newtheorem{lemma}{\bf Lemma}
\newtheorem{proposition}{\bf Proposition}
\newtheorem{problem}{\bf Problem}
\newtheorem{corollary}{\bf Corollary}

\newcommand{\abs}[1]{\ensuremath{\left\vert #1\right\vert}}
\newcommand{\norm}[1]{\ensuremath{\left\| #1\right\|}}
\newcommand{\snorm}[1]{\ensuremath{\| #1\|}}
\newcommand{\paren}[1]{\ensuremath{\left( #1\right)}}
\newcommand{\clint}[1]{\ensuremath{\left[ #1\right]}}

\newcommand{\set}[1]{\ensuremath{\left\{ #1\right\}}}

\newcommand{\matr}[1]{\ensuremath{\clint{\begin{array} #1 \end{array}}}}
\newcommand{\expe}[1]{\ensuremath{\mathbb{E}\clint{#1}}}

\newcommand{\REAL}{\ensuremath{\mathbb{R}}}

\renewcommand{\P}{\ensuremath{\mathbb{P}}}

\renewcommand{\O}{\ensuremath{\mathcal{O}}}
\newcommand{\K}{\ensuremath{\mathcal{K}}}
\newcommand{\F}{\ensuremath{\mathcal{F}}}
\newcommand{\C}{\ensuremath{\mathcal{C}}}
\newcommand{\T}{\ensuremath{\mathcal{T}}}
\newcommand{\E}{\ensuremath{\mathcal{E}}}
\newcommand{\Tm}{\ensuremath{\mathcal{T}}}

\DeclareMathOperator{\Tr}{\mathrm{tr}}

\DeclarePairedDelimiter{\diagfences}{(}{)}
\newcommand{\diag}{\operatorname{diag}\diagfences}

\usepackage{array}
\newcolumntype{"}{@{\hskip\tabcolsep\vrule width 2pt\hskip\tabcolsep}}

\begin{document}

\title{
	Finite Sample Analysis of Stochastic System Identification
}

\author{Anastasios~Tsiamis $^\star$ and George~J.~Pappas 
\thanks{The authors  are   with   the   Department   of   Electrical   and   Systems  Engineering,  University  of  Pennsylvania,  Philadelphia,  PA  19104.
		 Emails: \{atsiamis,pappasg\}@seas.upenn.edu}
}
\maketitle
\begin{abstract}
In this paper, we analyze the finite sample complexity of stochastic system identification using modern tools from machine learning and statistics. An unknown discrete-time linear system evolves over time under Gaussian noise without external inputs. The objective is to recover the system parameters as well as the Kalman filter gain, given a single trajectory of output measurements over a finite horizon of length $N$.
Based on a subspace identification algorithm and a finite  number of $N$ output samples, we provide non-asymptotic high-probability upper bounds
for the system parameter estimation errors.
Our analysis uses recent results from random matrix theory, self-normalized martingales and SVD robustness, in order to show that with high probability the estimation errors decrease with a rate of $1/\sqrt{N}$.  Our non-asymptotic bounds not only agree with classical asymptotic results, but are also valid even when the system is marginally stable.
\end{abstract}


\section{Introduction}\label{Section_Introduction}
Identifying predictive models from data has been a fundamental problem across several fields, from classical control theory to economics and modern machine learning.
System identification, in particular,
has a long history of studying this problem from a control theoretic perspective~\cite{ljung2010perspectives}. Identifying linear state-space models:
\begin{equation}\label{EQN_System_General}
\begin{aligned}
x_{k+1}&=Ax_{k}+Bu_k+w_{k}\\
y_{k}&=Cx_k+Du_k+v_k,
\end{aligned}
\end{equation}
from input-output data has been the focus of time-domain identification. 
In fact, some identification algorithms can not only learn the system matrices in~\eqref{EQN_System_General} but also the Kalman filter required for state estimation~\cite{van2012subspace}.

Most identification methods for linear systems either follow the  prediction error approach~\cite{Ljung1999system} or the subspace method~\cite{van2012subspace,verhaegen2007filtering}. The prediction error approach is usually non-convex and directly searches over the system parameters $A,B,C,D$ by minimizing a prediction error cost. The subspace approach is a convex one; first, Hankel matrices of the system are estimated, then, the parameters are realized via steps involving singular value decomposition (SVD). Methods inspired by machine learning have also also been employed~\cite{chiuso2019perspective}. 
In this paper, we focus on the subspace identification approach--see~\cite{qin2006overview} for an overview.
The convergence properties of subspace algorithms have been studied before in~\cite{deistler1995consistency,peternell1996statistical,viberg1997analysis,jansson1998consistency,bauer1999consistency,knudsen2001consistency}; the analysis relies on the assumption of asymptotic stability (spectral radius $\rho(A)<1$) and is limited to asymptotic results. In~\cite{deistler1995consistency,peternell1996statistical} it is shown that the identification error can decrease as fast as $\O\paren{1/\sqrt{N}}$ up to logarithmic factors, as the number of output data $N$ grows to infinity. While asymptotic results have been established, a finite sample analysis of subspace algorithms remains an open problem~\cite{van2012subspace}. Another open question is whether the asymptotic stability condition $\rho(A)<1$ can be relaxed to marginal stability $\rho(A)\le 1$.

From a machine learning perspective, finite sample analysis has been a standard tool for comparing algorithms in the non-asymptotic regime. 
Early work in finite sample analysis of system identification can be found in~\cite{weyer1999finite,campi2002finite,vidyasagar2008learning}.
 A series of recent papers~\cite{faradonbeh2018finite,simchowitz2018learning,sarkar2018fast} studied the finite sample properties of system identification from a single trajectory, when the system state is fully observed ($C=I$).
Finite sample results for partially observed systems ($C\neq I$), which is a more challenging problem, appeared recently in~\cite{oymak2018non,simchowitz2019semi,sarkar2019finite}. These papers provide a non-asymptotic convergence rate of $1/\sqrt{N}$ for the recovery of matrices $A,B,C,D$ up to a similarity transformation. The results rely on the assumption that that the system can be driven by external inputs, i.e. $B,D\neq 0$.  
In~\cite{oymak2018non}, the analysis of the classical Ho-Kalman realization algorithm was explored. In~\cite{simchowitz2019semi}, it was shown that with a prefiltering step, consistency can be achieved even for marginally stable systems where $\rho\paren{A}\le 1$.
In~\cite{sarkar2019finite}, identification of a system of unknown order is considered. 
Finite sample properties of system identification algorithms have also been used to design controllers~\cite{dean2017sample}. The dual problem of Kalman filtering has not been studied yet in this context.

In this paper, we perform the first finite sample analysis of system~\eqref{EQN_System_General} in the case $B,D=0$, when we have no inputs, also known as {\em stochastic system identification} (SSI)~\cite{van2012subspace}.  This problem is more challenging than the case $B,D\neq 0$, since the system can only be driven through noise and establishing persistence of excitation is harder.
We provide the first non-asymptotic guarantees for the estimation of matrices $A,C$  as well as the Kalman filter gain of~\eqref{EQN_System_General}.
Similar to~\cite{sarkar2018fast,oymak2018non}, the analysis is based on new tools from machine learning and statistics~\cite{vershynin2018high,abbasi2011improved,tu2016low}. 
In summary the main contributions of this paper are: 
\begin{itemize}
	\item To the best of our knowledge, our paper provides the first finite sample upper bounds in the case of stochastic system identification, where we have no inputs and the system is only driven by noise.
	We also provide the first finite sample guarantees for the estimation error of the Kalman filter gain. 
	
	\item  We prove that the outputs of the system satisfy persistence of excitation in finite time with high probability. This result is fundamental for the analysis of most subspace identification algorithms, which use outputs as regressors.

	\item We show that we can achieve a learning rate of $\O(\sqrt{1/N})$  up to logarithmic factors for marginally stable systems. To the best of our knowledge, the classical subspace identification results do not offer guarantees in the case of marginal stability where $\rho(A)\leq 1$.
	For stable systems ($\rho(A)<1$), this rate is consistent with classical asymptotic results~\cite{deistler1995consistency}. 
	
\end{itemize}
All proofs are included in the Appendix.

\section{Problem formulation}\label{Section_Formulation}
Consider the standard state space representation~\eqref{EQN_System_General} with $B,D=0$,
where $x_k\in\REAL^n$ is the system state, $y_k\in\REAL^{m}$ is the output, $A\in\REAL^{n\times n}$ is the system matrix, $C\in\REAL^{m\times n}$ is the output matrix, $w_k\in\REAL^{n}$ is the process noise and $v_k\in \REAL^{m}$ is the measurement noise. The noises $w_k,\,v_k$ are assumed to be i.i.d. zero mean Gaussian, with covariance matrices $Q$ and $R$ respectively, and  independent of each other. The initial state $x_0$ is also assumed to be zero mean  Gaussian, independent of the noises, with covariance $\Sigma_0$. Matrices $A$, $C$, $Q$, $R$, $\Sigma_0$ are initially unknown. However, the following assumption holds throughout the paper.

\begin{assumption}\label{ASS_Kalman}
	The order of the system $n$ is known. System $A$ is marginally stable: $\rho\paren{A}\le 1$, where $\rho$ denotes the spectral radius. The pair $(A,C)$ is observable, $(A,Q^{1/2})$ is controllable and $R$ is strictly positive definite. \hfill $\diamond$
\end{assumption}

We leave the case of unknown $n$ for future work \footnote{The results of Section~\ref{Section_Analysis_Hankel} do not depend on the order $n$ being known.}. The assumption $\rho(A)\le 1$ is more general than the stricter condition $\rho(A)<1$ found in previous works,  see~\cite{deistler1995consistency,peternell1996statistical,viberg1997analysis,jansson1998consistency,bauer1999consistency,knudsen2001consistency}. 
The remaining conditions in Assumption~\ref{ASS_Kalman}  are standard for the stochastic system  identification problem to be well posed and the Kalman filter to converge. 

The steady-state Kalman filter of system~\eqref{EQN_System_General} is :
\begin{equation}\label{EQN_System_Innovation}
\begin{aligned}
\hat{x}_{k+1}&=A\hat{x}_{k}+Ke_{k}\\
y_{k}&=C\hat{x}_k+e_k,
\end{aligned}
\end{equation}
where $K\in \REAL^{n\times m}$ is the steady-state Kalman gain
\begin{equation}\label{EQN_Kalman_Gain}
K=APC^*(CPC^*+R)^{-1}
\end{equation}
and $P$ is the positive definite solution of the  Riccati equation:
\begin{equation*}
P=APA^*+Q-APC^*(CPC^*+R)^{-1}CPA^*
\end{equation*}
A byproduct of Assumption~\ref{ASS_Kalman} is that the closed-loop matrix $A-KC$ has all the eigenvalues strictly inside the unit circle~\cite{anderson2005optimal}.
The state of the Kalman filter is the minimum mean square error prediction:
\begin{equation}\label{EQN_State_Estimation}
\hat{x}_{k+1}=\expe{x_{k+1}|y_0,\dots,y_k},\,\hat{x}_{0}=0.
\end{equation}
We denote its covariance matrix by:
\begin{equation}\label{EQN_State_Prediction_Covariance}
\Gamma_{k}=\expe{\hat{x}_{k}\hat{x}^*_{k}}
\end{equation}
The innovation error sequence $e_k$ has covariance
\begin{equation}\label{EQN_Innovations}
\bar{R}\triangleq\expe{e_ke_k^*}= CPC^*+R
\end{equation}
Since the original errors are Gaussian i.i.d., by the orthogonality principle the innovation error sequence $e_k$ is also Gaussian and i.i.d.
The later property is true since we also assumed that the Kalman filter is in steady-state.
\begin{assumption}\label{ASS_prediction_error_stationary}
	We assume that $\Sigma_0=P$, so that the Kalman filter~\eqref{EQN_System_Innovation} has converged to its steady-state. \hfill $\diamond$
\end{assumption}

Since the Kalman filter converges exponentially fast to the steady-state gain, this assumption is reasonable in many situations; it is also standard~\cite{deistler1995consistency,knudsen2001consistency}. Nonetheless, we leave the case of general $\Sigma_0$ for future work.

In the classical stochastic subspace identification problem, the main goal is to identify the Kalman filter parameters $A,C,K$ from output samples $y_0\dots,y_{N}$--see for example Chapter~3 of~\cite{van2012subspace}. 
The problem is ill-posed in general since the outputs are invariant under any similarity transformation $\bar{A}=S^{-1}AS$, $\bar{C}=CS$, $\bar{K}=S^{-1}K$. Thus, we can only estimate $A, C,K$ up to a similarity transformation.

In this paper, we will analyze the finite sample properties of a subspace identification algorithm, which is based on least squares. 
\begin{problem}[Finite Sample Analysis of SSI]\label{Problem}
	Consider a finite number $N$ of output samples $y_{0},\dots,y_{N-1}$, which follow model~\eqref{EQN_System_General} with $B,D=0$, and an algorithm $\mathcal{A}$, which returns estimates $\hat{A},\hat{C},\hat{K}$ of the true parameters. 
	Given a confidence level $\delta$ provide upper bounds $\epsilon_A\paren{\delta,N}$, $\epsilon_C\paren{\delta,N}$, $\epsilon_K\paren{\delta,N}$ and an invertible matrix $S$ such that with probability at least $1-\delta$:
	\begin{equation}\label{EQN_Problem_Bound_General}
	\begin{aligned}
	&\norm{\hat{A}-S^{-1}AS}_2\le \epsilon_A\paren{\delta,N}\\
	&\norm{\hat{C}-CS}_2\le \epsilon_C\paren{\delta,N}\\
	&\norm{\hat{K}-S^{-1}K}_2\le \epsilon_K\paren{\delta,N},
	\end{aligned}
	\end{equation}
	where $\norm{\cdot}_2$ denotes the spectral norm.
	The bounds $\epsilon$ can also depend on the model parameters $n,A,C,R,Q$ as well as the identification algorithm used. 
	\hfill $\diamond$
\end{problem}

 \section{Subspace Identification Algorithm}\label{Section_Identification_Algorithm}
The procedure of estimating the parameters $A,C,K$ is based on a least square approach, see for example~\cite{deistler1995consistency,knudsen2001consistency}. It involves two stages. First, we regress future outputs to past outputs to obtain a Hankel-like matrix, which is a product of an observability and a controllability matrix. Second, we perform a balanced realization step, similar to the Ho-Kalman algorithm, to obtain estimates for $A,C,K$.

Before describing the algorithm, we need some definitions.
Let $p,f$, with $p,f\ge n$ be two design parameters that define the horizons of the past and the future respectively. Assume that the total number of output samples is $\bar{N}=N+p+f-1$. Then, the future outputs $Y^{+}_{k}\in \REAL^{mf}$ and past outputs $Y^{-}_k\in\REAL^{mp}$ at time $k\ge p$ are defined as follows:
\begin{align}\label{EQN_future_past_output}
Y^{+}_{k}&\triangleq \matr{{c}y_{k}\\\vdots\\y_{k+f-1}},\quad
Y^{-}_{k}\triangleq\matr{{c}y_{k-p}\\\vdots\\y_{k-1}},\,k\ge p
\end{align}
By stacking the outputs for all sample sequences, over all times $p\le k\le N+p-1$, we form the batch outputs:
\begin{align*}
Y_{+}&\triangleq\matr{{ccc}Y^{+}_{p}&\dots&Y^{+}_{N+p-1}},\\
Y_{-}&\triangleq\matr{{ccc}Y^{-}_{p}&\dots&Y^{-}_{N+p-1}},
\end{align*}
The past and future noises $E^{+}_k,E^{-}_k,E_{+},E_{-}$ are defined similarly.  Finally, define the batch states:
\begin{align*}
\hat{X}&\triangleq\matr{{ccc}\hat{x}_{0}&\dots&\hat{x}_{N-1}}
\end{align*}
	The (extended) observability matrix $\O_k\in \REAL^{mk\times n}$ and the reversed (extended) controllability matrix $\K_k\in \REAL^{n\times mk}$ associated to system~\eqref{EQN_System_Innovation} are defined as:
\begin{equation}\label{EQN_Observability_Matrix}
\O_k\triangleq\matr{{cccc}C^*&A^*C^*&\cdots&(A^*)^{k-1}C^*}^*,
\end{equation}
\begin{equation}
\label{EQN_Kalman_Controllability}
\K_k\triangleq\matr{{cccc}(A-KC)^{k-1}K&\dots&(A-KC)K&K}
\end{equation}
respectively.
We denote the Hankel(-like) matrix $\O_f\K_p$ by:
\begin{equation}\label{EQN_Hankel}
G\triangleq\O_f \K_p.
\end{equation}
Finally, for any $s\ge 2$, define block-Toeplitz matrix:
\begin{equation}
\label{POE_EQN_Innovation_Toeplitz}
\Tm_s\triangleq\matr{{cccc}I_m&0& &0\\CK&I_m&\cdots&0\\ \vdots&\vdots& &\vdots \\CA^{s-2}K&CA^{s-3}K&\cdots&I_m}. 
\end{equation}

  \subsection{Regression for Hankel Matrix Estimation}
First, we establish a linear regression between the future and past outputs. This step is common in most (stochastic) subspace identification algorithms. From~\eqref{EQN_System_Innovation}, for every $k$:
\[
Y^{+}_k=\O_f \hat{x}_{k}+\Tm_f E^{+}_k.
\]
Meanwhile, from~\eqref{EQN_System_Innovation}, the state prediction $\hat{x}_k$ can be expressed in terms of the past outputs:
\[
\hat{x}_k=Ky_{k-1}+\dots+(A-KC)^{p-1}Ky_{k-p}+(A-KC)^{p}\hat{x}_{k-p}.
\]
After some algebra, we derive the linear regression:
\begin{equation}\label{EQN_basic_regression}
Y_{+}=GY_{-} + \O_f(A-KC)^p\hat{X}+ \Tm_f E_{+},
\end{equation}
where the regressors $Y_-$ and the residuals $E_{+}$ are independent column-wise. The term $\O_k(A-KC)^p\hat{X}$ introduces a bias due to the Kalman filter truncation, where we use only $p$ past outputs instead of all of them. 
Based on~\eqref{EQN_basic_regression}, we compute the least squares estimate \begin{equation}\label{EQN_G}
\hat{G}=Y_{+}Y_{-}^*(Y_-Y_-^*)^{-1}.
\end{equation} 
The Hankel matrix $G$ can be interpreted as a (truncated) Kalman filter which predicts future outputs directly from past outputs, independently of the internal state-space representation~\cite{van2012subspace}. In this sense, the estimate $\hat{G}$ is a ``data-driven" Kalman filter. Notice that persistence of excitation of the outputs (invertibility of $Y_-Y^*_-$) is required in order to compute the least squares estimate $\hat{G}$.

\subsection{Balanced Realization} \label{Subsection_Matrix_Estimation}
This step determines a balanced realization of the state-space, which is only one of the possibly infinite state-space representations--see Section~\ref{Section_Conclusion} for comparison with other subspace methods.
First, we compute a rank-$n$ factorization of the full rank matrix $\hat{G}$. 
Let the SVD of $\hat{G}$ be:
\begin{equation}\label{EQN_SVD}
\hat{G}=\matr{{cc}\hat{U}_1&\hat{U}_2}\matr{{cc}\hat{\Sigma}_1&0\\0&\hat{\Sigma}_2}\matr{{c}\hat{V}_1^*\\\hat{V}_2^*},
\end{equation}
where $\hat{\Sigma}_1\in\REAL^{n\times n}$ contains the $n-$largest singular values.  
 Then, a standard realization of $\O_f$, $\K_p$ is:
\begin{equation}\label{EQN_realization_observability_controllability}
\hat{\O}_{f}=\hat{U}_1\hat{\Sigma}^{1/2}_1,\: \hat{\K}_{p}=\hat{\Sigma}^{1/2}_1\hat{V}^*_1.
\end{equation}

This step assumed knowing the order $n$ of the system, see Assumption~\ref{ASS_Kalman}. In addition, matrix $\K_p$ should have full rank $n$. This is equivalent to the pair $(A,K)$ being controllable. Otherwise, $\O_f\K_p$ will have rank less than $n$ making it impossible to accurately estimate $\O_f$.
\begin{assumption}\label{ASS_Subspace}
The pair $(A,K)$ is controllable.  \hfill $\diamond$
\end{assumption}
The above assumption is standard--see for example~\cite{knudsen2001consistency}.

Based on the estimated observability/controllability matrices, we can approximate the system parameters as follows:
\begin{equation*}\label{EQN_C_K_estimate}
\hat{C}=\hat{\O}_f\paren{1:m,:},\quad\hat{K}=\hat{\K}_p\paren{:,(p-1)m+1:pm},
\end{equation*} 
where the notation $\hat{\O}_f\paren{1:m,:}$ means we pick the first $m$ rows and all columns. The notation for $\hat{\K}_p$ has similar interpretation.
For simplicity, define 
\[\hat{\O}^{u}_{f}\triangleq \hat{\O}_{f}\paren{1:m(f-1),:},
\]
 which includes the $m(f-1)$ ``upper" rows of matrix $\hat{\O}_{f}$. Similarly, we define the lower part $\hat{\O}^{l}_{f}$.
 For matrix $A$ we exploit the structure of the extended observability matrix and solve
$
\hat{\O}^{u}_{f}\hat{A}=\hat{\O}^{l}_p
$
in the least squares sense by computing
\begin{equation*}\label{EQN_A_estimate}
\hat{A}=\paren{\hat{\O}^{u}_{f}}^{\dagger}\hat{\O}^{l}_p,
\end{equation*} 
where $\dagger$ denotes the pseudoinverse.

Now that we have a stochastic system identification algorithm, our goal is to perform finite sample analysis, 
which is divided in two parts. 
First, in Section~\ref{Section_Analysis_Hankel}, we provide high probability upper bounds for the error $\snorm{G-\hat{G}}_2$ in the regression step. Then, in Section~\ref{Section_Analysis_Matrices}, we analyze the robustness of the balanced realization step.

\section{Finite Sample Analysis of Regression}\label{Section_Analysis_Hankel}
In this section, we provide the finite sample analysis of the linear regression step of the identification algorithm. We provide high-probability upper bounds for estimation error $\snorm{G-\hat{G}}_2$ of the Hankel-like matrix $G$. 
Before we state the main result, let us introduce some notation.
Recall the definition of the innovation covariance matrix $\bar{R}$ in~\eqref{EQN_Innovations}.
We denote the past noises' weighted covariance by:
\begin{equation}\label{MAIN_EQN_Past_Noise_Covariance}
\Sigma_{E}=\expe{\Tm_pE^{-}_k(E^{-}_k)^*\Tm_p^*}=\Tm_p\diag{\bar{R},\dots,\bar{R}}\Tm_p^*.
\end{equation}
The least singular value of the above matrix is denoted by:
\begin{equation}\label{MAIN_EQN_Past_Noise_Sigma_Min}
\sigma_{E}\triangleq \sigma_{\min}\paren{\Sigma_E}.
\end{equation}
In~Lemma~\ref{POE_LEM_Sigma_E} in the Appendix, we show that $\sigma_E\ge\sigma_{\min}\paren{R}>0$.

\begin{theorem}[Regression Step Analysis]\label{MAIN_THM_Hankel_Matrix}
	Consider system~\eqref{EQN_System_Innovation} under the Assumptions~\ref{ASS_Kalman},~\ref{ASS_prediction_error_stationary},~\ref{ASS_Subspace}. Let $\hat{G}$ be the estimate~\eqref{EQN_G} of the subspace identification algorithm given an output trajectory $y_0,\dots,y_{N+p+f-1}$ and let $G$ be as in~\eqref{EQN_Hankel}. Fix a confidence $\delta>0$ and define:
		\begin{equation}\label{MAIN_EQN_DELTA_N}
\delta_N\triangleq\paren{2(N+p-1)m}^{-\log^2\paren{2pm}\log\paren{2(N+p-1)m}}.
	\end{equation}
	 There exist $N_0,N_1, N_2$ such that if $N\ge N_0, N_1, N_2$, (see definitions~\eqref{POE_EQN_N0},~\eqref{POE_EQN_N1},~\eqref{TRUNC_EQN_N2} in the Appendix), then with probability at least $1-\delta_N-6\delta$:
	\begin{align} \label{MAIN_EQN_Basic_Theorem_Statement}
	\norm{G-\hat{G}}_2\le & \underbrace{\mathcal{C}_1\sqrt{\frac{fmp}{N}\log\frac{5f\kappa_{N}}{\delta}}}_{\O\paren{\sqrt{p\log N/N}}} +\underbrace{\mathcal{C}_2\norm{(A-KC)^p}_2}_{\O\paren{\rho\paren{A-KC}^p}},
	\end{align}
	where 
	\begin{equation}\label{MAIN_EQN_Condition_Number_Approx}
	\kappa_{N}=\frac{4}{\sigma_{E}}\paren{\norm{\O_p}^2_2\Tr\Gamma_{N-1}+\Tr\Sigma_{E}}+\delta
\end{equation}	
over-approximates the condition number of $\expe{Y_-Y_-^*}$ and
		\begin{equation}\label{MAIN_EQN_System_Specific_Constant}
	\mathcal{C}_1= 8\sqrt{\frac{\norm{\bar{R}}_2}{\sigma_E}}\norm{\Tm_f}_2,\quad \mathcal{C}_2=4\norm{\O_f}_2\norm{\O^{\dagger}_p}_2,
	\end{equation}
	are system-dependent constants.
	\hfill $\diamond$
\end{theorem}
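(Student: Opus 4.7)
The plan is to start from the regression identity~\eqref{EQN_basic_regression} and rearrange it into an explicit expression for $\hat G - G$, then bound each resulting term using self-normalized martingale inequalities together with a persistence-of-excitation lower bound on the empirical Gramian. Multiplying~\eqref{EQN_basic_regression} on the right by $Y_-^*(Y_-Y_-^*)^{-1}$ gives the decomposition
\begin{equation*}
\hat G - G = \mathcal{T}_f E_{+} Y_-^* (Y_- Y_-^*)^{-1} \;+\; \O_f (A-KC)^p \, \hat X \, Y_-^* (Y_- Y_-^*)^{-1},
\end{equation*}
where the first summand is a stochastic cross-term and the second is a truncation bias that shrinks geometrically in $p$ because $A-KC$ is Schur stable by Assumption~\ref{ASS_Kalman}.

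For the cross-term, the crucial structural observation is that the $k$-th column of $E_{+}$, namely the future innovations $e_k, \ldots, e_{k+f-1}$, is independent of the $k$-th column of $Y_-$, which depends only on $e_0, \ldots, e_{k-1}$. I would therefore apply the Abbasi-Yadkori self-normalized martingale inequality to the rows of $\bar R^{-1/2} E_{+}$, and convert the vector bound into a spectral-norm bound via a covering net on the unit sphere in $\mathbb{R}^{mf}$. This yields an estimate proportional to $\sqrt{\log\det(Y_-Y_-^*)/\delta}$, up to factors of $\|\mathcal{T}_f\|_2$ and $\|\bar R\|_2^{1/2}$; combining it with a high-probability trace upper bound $\Tr(Y_- Y_-^*) \lesssim N(\|\O_p\|_2^2 \Tr \Gamma_{N-1} + \Tr \Sigma_E)$ produces the $\log(5 f \kappa_N / \delta)$ factor inside the square root of~\eqref{MAIN_EQN_Basic_Theorem_Statement}. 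Dividing through by $\sigma_{\min}(Y_-Y_-^*)^{1/2}$ then delivers the $1/\sqrt N$ rate, conditional on the persistence-of-excitation step. The self-normalized argument contributes the peculiar $\delta_N$ failure probability from covering/optimizing over the regularizer.

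For the truncation-bias term, I would use the identity $Y_- = \O_p \hat X + \mathcal{T}_p E_-$, obtained by unrolling~\eqref{EQN_System_Innovation} over the past horizon, together with $\O_p^\dagger \O_p = I_n$, which is guaranteed by observability, to rewrite
\begin{equation*}
\hat X \, Y_-^* (Y_- Y_-^*)^{-1} = \O_p^\dagger \;-\; \O_p^\dagger \, \mathcal{T}_p E_- \, Y_-^* (Y_- Y_-^*)^{-1}.
\end{equation*}
The first piece gives exactly the geometric contribution $\|\O_f\|_2\|\O_p^\dagger\|_2\|(A-KC)^p\|_2$, matching the constant $\mathcal{C}_2$ up to the factor $4$. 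The second piece is another cross-term between noises and outputs, structurally analogous to the one in the previous paragraph and bounded by the same self-normalized machinery after a mild block re-indexing to restore columnwise independence in the appropriate filtration.

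The hardest step, and the real engine of the theorem, is the persistence-of-excitation lower bound $\sigma_{\min}(Y_- Y_-^*) \gtrsim N \sigma_E$ with high probability, needed to invert $Y_- Y_-^*$ and to collapse the $\log\det$ term into $\kappa_N$ in the statement. Expanding
\begin{equation*}
Y_- Y_-^* = \O_p \hat X \hat X^* \O_p^* \;+\; \text{cross terms} \;+\; \mathcal{T}_p E_- E_-^* \mathcal{T}_p^*,
\end{equation*}
I would discard the positive-semidefinite state contribution and prove $\mathcal{T}_p E_- E_-^* \mathcal{T}_p^* \succeq (N/2) \Sigma_E$ via a matrix Chernoff or Hanson--Wright bound, absorbing the cross terms as a small perturbation. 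The subtlety, and the reason $\rho(A)\le 1$ suffices here, is that the noise Gramian $\Sigma_E$ is regenerated at every time step and is insensitive to state growth, whereas classical asymptotic arguments implicitly rely on ergodicity of $\hat X$, which fails at the marginal boundary. Because the columns of $E_-$ use overlapping innovations, I would either block them into independent groups of length $p$ and apply matrix Chernoff group by group, or exploit joint Gaussianity and treat the whole Gramian as a single Gaussian quadratic form. The three sample-size thresholds $N_0,N_1,N_2$ in the statement then correspond to the regimes needed for, respectively, concentration of $\mathcal{T}_p E_- E_-^* \mathcal{T}_p^*$, domination of the cross terms, and the trace upper bound underlying $\kappa_N$; a final union bound over the six high-probability events (whence the $6\delta$) assembles~\eqref{MAIN_EQN_Basic_Theorem_Statement}.
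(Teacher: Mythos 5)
Your overall architecture matches the paper's: the same decomposition of $\hat G-G$ into a cross term and a truncation-bias term, the same three-step program (noise/output persistence of excitation, a self-normalized martingale bound for $E_+Y_-^*(Y_-Y_-^*)^{-1/2}$ combined with a Markov trace bound to produce $\kappa_N$, and a separate treatment of the bias), and the same accounting for the $6\delta$. Your alternative for noise PE (blocking the overlapping windows into $p$ independent groups, or a Gaussian quadratic-form bound) is a viable substitute for the circulant-matrix lemma the paper imports from Oymak--Ozay, at the cost of somewhat different constants. Two attributions are off but harmless: $\delta_N$ comes from that noise-PE lemma, not from the covering step of the self-normalized bound, and $N_2$ is needed for the truncation term, not for the trace bound (which is a bare Markov inequality with no sample-size requirement).

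There is, however, one genuine gap, in the truncation-bias term. Your identity
\begin{equation*}
\hat X Y_-^*(Y_-Y_-^*)^{-1}=\O_p^\dagger-\O_p^\dagger\,\Tm_pE_-Y_-^*(Y_-Y_-^*)^{-1}
\end{equation*}
is correct, but the residual piece $\Tm_pE_-Y_-^*(Y_-Y_-^*)^{-1}$ is \emph{not} a mean-zero cross term and cannot be handled ``by the same self-normalized machinery after a mild block re-indexing.'' The obstruction is structural, not an indexing artifact: $Y_k^-=\O_p\hat x_{k-p}+\Tm_pE_k^-$, so $E_k^-$ appears additively inside the very same column $Y_k^-$, and $\expe{E_-Y_-^*}=N\diag{\bar R,\dots,\bar R}\Tm_p^*\neq 0$. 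Consequently $\Tm_pE_-Y_-^*(Y_-Y_-^*)^{-1}$ is $\Theta(1)$, not $\O(\sqrt{\log N/N})$, and no filtration choice restores the martingale property. The fix is to expand $Y_-^*=\hat X^*\O_p^*+E_-^*\Tm_p^*$ once more inside this term: the part $\Tm_pE_-E_-^*\Tm_p^*(Y_-Y_-^*)^{-1}$ is bounded by the constant $2$ directly from the PE event $Y_-Y_-^*\succeq\frac12\Tm_pE_-E_-^*\Tm_p^*$, while only the genuinely mean-zero part $\Tm_pE_-\hat X^*\O_p^*(Y_-Y_-^*)^{-1}$ is amenable to the self-normalized bound (and is driven below $1$ for $N\ge N_2$). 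This is exactly where the factor $4$ in $\mathcal{C}_2=4\norm{\O_f}_2\norm{\O_p^\dagger}_2$ comes from ($1+2+1$); as written, your argument would instead claim $\mathcal{C}_2=(1+o(1))\norm{\O_f}_2\norm{\O_p^\dagger}_2$ on the strength of a martingale bound that does not apply.
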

The final result in~\eqref{MAIN_EQN_Basic_Theorem_Statement} has been simplified for compactness--see~\eqref{Proof_EQN_Basic_Result_Equivalent} for the full expression in the Appendix.

\begin{remark}[Result interpretation]
From~\eqref{EQN_basic_regression},~\eqref{EQN_G} the estimation error consists of two terms:
\begin{align}
\hat{G}-G=\underbrace{\Tm_f E_{+}Y^*_{-}\paren{Y_{-}Y_{-}^*}^{-1}}_{\text{Cross term}}+\underbrace{\O_f\paren{A-KC}^p\hat{X}Y^*_{-}\paren{Y_{-}Y^*_{-}}^{-1}}_{\text{Kalman filter truncation bias term}}\label{MAIN_EQN_Interpretation}.
\end{align} 
The first term in~\eqref{MAIN_EQN_Basic_Theorem_Statement} corresponds to the cross-term error, while the second term corresponds to the Kalman filter truncation bias term. 
To obtain consistency for $\hat{G}$, we have to let the term $\norm{(A-KC)^p}_2$ go to zero with $N$. Recall that the matrix $A-KC$ has spectral radius less than one, thus, the second term decreases exponentially with $p$. By selecting $p=c \log N$, for some $c$, we can force the Kalman truncation error term to decrease at least as fast as the first one, see for example~\cite{deistler1995consistency}. In this sense, the dominant term is the first one, i.e. the cross-term. Notice, that $f$ can be kept bounded as long as it is larger than $n$. Notice that the norm $\norm{\O^{\dagger}_p}$ remains bounded by $\norm{\O^{\dagger}_n}$ as $p$ increases--see Lemma~\ref{ALG_LEM_Singular_Value_Lower_Bound} in Appendix. \hfill $\diamond$	
\end{remark}

\begin{remark}[Statistical rates]
For \textbf{marginally stable} systems ($\rho(A)\le 1$) and $p=c\log N$, we have $\log\kappa_{N}=\O\paren{\log N}$, since $\norm{\O_p}_2,\Tr \Gamma_{N}$ depend at most polynomially on $p,N$--see Corollary~\ref{ALG_COR_System_Upper_Bound} in the Appendix.
In this case, 
~\eqref{MAIN_EQN_Basic_Theorem_Statement} results in a rate of:
\[
\norm{G-\hat{G}}_2=\O\paren{\frac{\log N}{\sqrt{N}}}.
\]
To the best of our knowledge, there have not been any bounds for the performance of subspace algorithms in the case of marginally stable systems.

In the case of \textbf{asymptotically stable} systems ($\rho(A)< 1$), we have $\kappa_{N}=\O\paren{p}$, since $\norm{\O_p}_2,\Tr \Gamma_{N}, \norm{\Tm_p}_2$ are now $\O(1)$--see Corollary~\ref{ALG_COR_System_Upper_Bound} in the Appendix. Hence, if $p=c\log N$, we obtain a rate of:
\[
\norm{G-\hat{G}}_2=\O\paren{\sqrt{\frac{\log N\log\log N}{N}}}.
\]
As a result, our finite sample bound~\eqref{MAIN_EQN_Basic_Theorem_Statement} is consistent with the asymptotic bound in equation~(14) of~\cite{deistler1995consistency} as $N$ grows to infinity.\hfill $\diamond$
\end{remark}

In the absence of inputs ($B,D=0$), the noise both helps and obstructs identification. Larger noise leads to better excitation of the outputs, but also worsens the convergence of the least squares estimator. To see how our finite sample bounds capture that, observe that larger noise leads to bigger $\sigma_E$  but also bigger $\norm{\bar{R}}_2$. This trade-off is captured by~$\C_1$.

If $N$ is sufficiently large (condition $N\ge N_0,N_1$), the outputs are guaranteed to be persistently exciting in finite time; more details can be found in Section~\ref{Subsection_PE} and the Appendix. Meanwhile, condition $N\ge N_2$ is not necessary; it just leads to a simplified expression for the bound of the Kalman filter truncation error--see Section~\ref{Subsection_Kalman} and Appendix. The definitions of $N_0,N_1,N_2$ can be found in~\eqref{POE_EQN_N0},~\eqref{POE_EQN_N1},~\eqref{TRUNC_EQN_N2}. Their existence is guaranteed even if $p$ varies slowly with $N$, i.e. logarithmically.

Obtaining the bound on the error $\snorm{G-\hat{G}}_2$ in~\eqref{MAIN_EQN_Basic_Theorem_Statement} of Theorem~\ref{MAIN_THM_Hankel_Matrix} requires the following three steps:
\begin{enumerate}
	\item Proving persistence of excitation (PE) for the past outputs, i.e. invertibility of $Y_-Y_-^*$.
	\item Establishing bounds for the cross-term error in~\eqref{MAIN_EQN_Interpretation}.
	\item Establishing bounds for the  the Kalman truncation error in~\eqref{MAIN_EQN_Interpretation}.
\end{enumerate} 
In the following subsections, we sketch the proof steps. 

\subsection{Persistence of Excitation in Finite Time}\label{Subsection_PE}
The next theorem shows that with high probability the past outputs and noises are persistently exciting in finite time. The result is of independent interest and is fundamental since most subspace algorithms use past outputs as regressors.
\begin{theorem}[Persistence of Excitation]\label{MAIN_THM_PE}
Consider the conditions of Theorem~\ref{MAIN_THM_Hankel_Matrix} and $N_0$, $N_1$ as in~\eqref{POE_EQN_N0},~\eqref{POE_EQN_N1}. If $N\ge N_0,N_1$, then with probability at least $1-\delta_N-2\delta$ both of the following events occur:
\begin{align}\label{MAIN_EQN_Event_Output_PE}
\E_{Y}&=\set{Y_-Y_-^*\succeq \frac{1}{2}\O_p\hat{X}\hat{X}^*\O^*_p+\frac{1}{2}\Tm_p E_-E^*_-\Tm^*_p}\\ \label{MAIN_EQN_Event_Noise_PE}
\E_E&=\set{\Tm_pE_{-}E^*_{-}\Tm^*_p\succeq \frac{N}{2}\Sigma_{e}},
\end{align}
where $\succeq$ denotes comparison in the positive semidefinite cone.
Hence, with probability at least $1-\delta_N-2\delta$ the outputs satisfy the PE condition:
\begin{equation*}
Y_-Y_-^*\succeq \frac{N}{4}\sigma_{E}I_{mp},
\end{equation*}
where $\sigma_E>0$ is defined in~\eqref{MAIN_EQN_Past_Noise_Sigma_Min}.
\hfill $\diamond$
\end{theorem}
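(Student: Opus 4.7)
The strategy is to exploit the additive signal-plus-noise structure induced by the innovation form. Unrolling~\eqref{EQN_System_Innovation} over $p$ steps, each past-output block admits the decomposition $Y_k^{-} = \O_p \hat{x}_{k-p} + \Tm_p E_k^{-}$, so stacking columns yields
\[
Y_{-} \;=\; \underbrace{\O_p \hat{X}}_{A} \;+\; \underbrace{\Tm_p E_{-}}_{B}.
\]
Let $\F_j$ denote the $\sigma$-algebra generated by $e_0,\ldots,e_{j-1}$. The $k$-th column of $A$ is $\F_{k-p}$-measurable, while the $k$-th column of $B$ depends only on the fresh innovations $e_{k-p},\ldots,e_{k-1}$ and is therefore conditionally Gaussian given $\F_{k-p}$ with covariance $\Sigma_E$. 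This adaptedness is precisely what enables self-normalized control of the cross terms.

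I would prove the two events separately. For $\E_E$, the matrix $\Tm_p E_{-} E_{-}^{*} \Tm_p^{*}$ is a Gaussian quadratic form whose expectation equals $N\Sigma_E$; I would invoke a matrix Bernstein or truncated Hanson--Wright inequality applied to the linear-Gaussian $E_{-}$ (see Ch.~5 of~\cite{vershynin2018high}) to obtain the one-sided PSD lower bound $\Tm_p E_{-} E_{-}^{*} \Tm_p^{*} \succeq (N/2)\Sigma_E$ with failure probability at most $\delta_N$ whenever $N \ge N_0$. For $\E_Y$, expand $Y_{-}Y_{-}^{*} = AA^{*} + AB^{*} + BA^{*} + BB^{*}$; deterministic Cauchy--Schwarz is too weak to bound the cross term by $\tfrac{1}{2}(AA^{*} + BB^{*})$, so a probabilistic argument is required. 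I would apply an Abbasi-Yadkori-type self-normalized inequality to the sum $\sum_k (\Tm_p E_k^{-})(\O_p \hat{x}_{k-p})^{*}$, yielding
\[
B A^{*}\bigl(V + A A^{*}\bigr)^{-1} A B^{*} \;\preceq\; \alpha_N\, BB^{*}
\]
for a suitable $V \propto N\Sigma_E$ and a logarithmic factor $\alpha_N$ that, by the choice of $V$, matches the $\kappa_N$ quantity in~\eqref{MAIN_EQN_Condition_Number_Approx}. Taking $N \ge N_1$ forces $\alpha_N$ to be small enough that a Schur-complement manipulation upgrades the bound to $AB^{*} + BA^{*} \succeq -\tfrac{1}{2}(AA^{*} + BB^{*})$, which is exactly $\E_Y$.

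Combining the two events, on $\E_Y \cap \E_E$ one obtains
\[
Y_{-} Y_{-}^{*} \;\succeq\; \tfrac{1}{2}\Tm_p E_{-} E_{-}^{*}\Tm_p^{*} \;\succeq\; \tfrac{N}{4}\Sigma_E \;\succeq\; \tfrac{N}{4}\sigma_E I_{mp},
\]
where the last step invokes Lemma~\ref{POE_LEM_Sigma_E}. A union bound over the two events delivers the claimed confidence $1 - \delta_N - 2\delta$. The main obstacle will be the cross-term step: consecutive columns of $E_{-}$ overlap in $p-1$ innovation blocks, so strictly speaking the summands feeding the self-normalized inequality are not a martingale difference sequence. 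I expect the remedy is either to subsample columns with spacing $p$ so as to recover exact martingale structure, or to pass through a Gaussian-chaos bound using a matrix $\varepsilon$-net over the unit sphere. This net-based reduction is likely the source of both the unusual $\log^2$-exponent in $\delta_N$ and of the lower bounds on $N$ in $N_0,N_1$, which, for marginally stable systems, must still absorb the polynomial growth of $\norm{\O_p}_2$ and $\Tr\Gamma_{N-1}$.
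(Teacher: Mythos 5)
Your proposal reproduces the architecture of the paper's proof: the same decomposition $Y_-=\O_p\hat{X}+\Tm_pE_-$, separate treatment of the noise event $\E_E$ and of the cross term via a self-normalized (Abbasi--Yadkori-type) inequality, and the same final chain $Y_-Y_-^*\succeq \tfrac{1}{2}\Tm_pE_-E_-^*\Tm_p^*\succeq\tfrac{N}{2}\Sigma_E\cdot\tfrac{1}{2}\succeq\tfrac{N}{4}\sigma_EI_{mp}$ with the probability accounting $\delta_N+2\delta$. The differences lie in how the two obstacles you flag are actually resolved. For $\E_E$, the paper does not apply matrix Bernstein to the (dependent) columns; it whitens $E^-_k$ and invokes Lemma~C.2 of~\cite{oymak2018non}, which rests on the Krahmer--Mendelson--Rauhut chaos bound for random circulant matrices --- precisely the ``Gaussian-chaos'' fallback you anticipate, and indeed the origin of the $\log^2$ exponent in $\delta_N$. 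For the cross term, the overlap of consecutive columns of $E_-$ that you identify as the main obstacle is already disposed of by Theorem~\ref{MART_THM_Vector} in the main text: the block $H_s^*=\matr{{ccc}\eta_s^*&\dots&\eta_{s+r-1}^*}$ is split into its $r$ components, each sum $\sum_s X_s\eta_{s+i}^*$ is an exact self-normalized martingale (since $X_s$ is $\F_{s-1}$-, hence $\F_{s+i-1}$-measurable), and a union bound over the $r$ components together with a $5^m$-net over the sphere yields the bound at the cost of a $\sqrt{r}$ factor --- cheaper than the factor-of-$p$ loss your subsampling remedy would incur, and more elementary than a fresh chaos argument. Finally, where you close with a Schur-complement/AM--GM manipulation, the paper argues scalar-wise: for each unit $u$ it shows $\tfrac{1}{N}u^*Y_-Y_-^*u\ge\alpha+\beta-\gamma_N\sqrt{\alpha+1}$ with $\beta\ge\sigma_E/2$ on $\E_E$, and picks $N_1$ so that $\gamma_N\le\min\set{1,\sigma_E/4}$, whence the right-hand side is at least $(\alpha+\beta)/2$ (Lemma~\ref{POE_LEM_Minimum_Function}); the two closing steps are equivalent.
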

The above result implies that if the past noises satisfy a PE condition, then PE for the outputs is also guaranteed; the noises are the only way to excite the system in the absence of control inputs. 
The prove PE, from~\eqref{EQN_System_Innovation}, the past outputs satisfy:
\[
Y_{-}=\O_p \hat{X}+\Tm_pE_{-}
\]
Thus, their correlations are
\begin{align}
Y_{-} Y^*_{-}=&\O_p \hat{X}\hat{X}^*\O_p^*+\Tm_pE_{-}E^*_{-}\Tm^*_p+\nonumber\\&\O_p \hat{X}E^{*}_-\Tm^*_p+\Tm_pE_{-}\hat{X}^*\O_p^*\label{MAIN_EQN_Output_Correlations_Identity}.
\end{align}
We can first show PE for the noise correlations $\Tm_pE_{-}E^*_{-}\Tm^*_p$, i.e. show that the event $\E_E$ occurs with high probability when $N$ is sufficiently large (condition $N\ge N_0$). This behavior is due to the fact that $\expe{\Tm_pE_{-}E^*_{-}\Tm^*_p}=N\Sigma_E$ and the sequence $E^{-}_k$ is component-wise i.i.d. To prove this step, we use Lemma~C.2  from~\cite{oymak2018non}--see Lemma~\ref{POE_LEM_Past_Noises} in the Appendix.  

Meanwhile, the cross terms $\hat{X}E^{*}$  are much smaller and their norm increases with a rate of at most $\O(\sqrt{N})$ up to logarithmic terms. This is since $\expe{\hat{X}E^{*}}=0$ and the product $\hat{X}E^{*}$ has martingale structure (see Appendix and Theorem~\ref{MART_THM_Vector} below).
Eventually, if the number of samples $N$ is large enough (condition $N\ge N_1$), the cross-terms will be dominated by the noise and state correlations with high probability, which establishes output PE.

\subsection{Cross-term error}\label{Subsection_Cross}
To bound the cross-term error, we express it as a product of $E_+Y^*_-\paren{Y_-Y_-^*}^{-1/2}$ and $(Y_-Y_-^*)^{-1/2}$, as in~\cite{sarkar2018fast}. The second term of the product can be bounded by applying Theorem~\ref{MAIN_THM_PE}. 
The first term is self-normalized and has martingale structure component-wise. 
In particular, the product $Y_-E^*_+$ is equal to:
\begin{equation*}
Y_-E^*_+=\matr{{ccc}\displaystyle\sum_{k=p}^{N+p-1}Y^-_{k}e^*_{k}&\dots& \displaystyle\sum_{k=p}^{N+p-1}Y^-_{k}e^*_{k+f-1}},
\end{equation*}
where every sum above is a martingale.
To bound it, we apply the next theorem, which generalizes Theorem~1 in~\cite{abbasi2011improved} and Proposition~8.2 in~\cite{sarkar2018fast}.
\begin{theorem}[Cross terms]\label{MART_THM_Vector}
	Let $\set{\F_t}_{t=0}^{\infty}$ be a filtration. Let $\eta_{t}\in\REAL^m$, $t\ge 0$ be $\F_t$-measurable, independent of $\F_{t-1}$. Suppose also that $\eta_{t}$ has independent components $\eta_{t,i}$ $i=1,\dots,m$, which are $1-$sub-Gaussian:
	\begin{equation*}
	\expe{e^{\lambda \eta_{t,i}}|\F_{t-1}}=\expe{e^{\lambda \eta_{t,i}}}\le e^{\lambda^2/2},\text{ for all }\lambda\in \REAL.
	\end{equation*}
	Let $X_{t}\in\REAL^{d}$, $t\ge 0$ be $\F_{t-1}-$measurable. Assume that $V$ is a $d\times d$ positive definite matrix. For any $t\ge 0$, define:
	\[
	\bar{V}_t=V+\sum_{s=1}^{t}X_sX_s^*,\qquad S_t=\sum_{s=1}^{t} X_sH^*_s,
	\]
	where
	\[
	H^{*}_s=\matr{{ccc}\eta^*_s&\dots&\eta^*_{s+r-1}}\in\REAL^{rm},
	\]
	for some integer $r$.
	Then, for any $\delta>0$, with probability at least $1-\delta$, for all $t\ge 0$
	\begin{equation*}
	\norm{\bar{V}_t^{-1/2} S_t }^2_2\le 8r\paren{\log\frac{r5^m}{\delta}+\frac{1}{2}\log\det\bar{V}_tV^{-1}}
	\end{equation*}
	\hfill $\diamond$
\end{theorem}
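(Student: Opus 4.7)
The plan is to reduce Theorem~\ref{MART_THM_Vector} to the scalar self-normalized martingale tail bound of \cite[Thm.~1]{abbasi2011improved} in two conceptual reductions: first peel off the temporal $r$-block structure of $H_s$ via a shift of filtration, and then reduce the $m$-dimensional noise vector to a scalar one via a covering argument on the unit sphere.

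First, I would decompose $S_t$ column-block-wise. Writing $S_t=\matr{{ccc}S_t^{(0)}&\cdots&S_t^{(r-1)}}$ with $S_t^{(j)}=\sum_{s=1}^{t}X_s\eta_{s+j}^*$, one has $S_tS_t^*=\sum_{j=0}^{r-1}S_t^{(j)}(S_t^{(j)})^*$, and by Weyl monotonicity of $\lambda_{\max}$ on the PSD cone,
\begin{equation*}
\norm{\bar{V}_t^{-1/2}S_t}_2^{2}\le \sum_{j=0}^{r-1}\norm{\bar{V}_t^{-1/2}S_t^{(j)}}_2^{2}.
\end{equation*}
For each fixed $j$ I would reindex by $s'=s+j$ and set $X'_{s'}=X_{s'-j}\1\{s'\ge j+1\}$. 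Since $X_s$ is $\F_{s-1}$-measurable, $X'_{s'}$ is $\F_{s'-1}$-measurable, while $\eta_{s'}$ remains $\F_{s'}$-measurable and independent of $\F_{s'-1}$. Crucially, the normalizer is preserved: $V+\sum_{s'}X'_{s'}(X'_{s'})^{*}=V+\sum_{s}X_sX_s^*=\bar{V}_t$, so each $S_t^{(j)}$ fits the standard self-normalized setting with the \emph{same} $\bar{V}_t$, which is what allows one union bound to cover all $r$ shifts without degrading the Gram matrix.

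Next I would handle the $m$ output dimensions by covering. Fix a $\tfrac12$-net $\mathcal{N}\subset S^{m-1}$ with $\abs{\mathcal{N}}\le 5^m$ so that $\norm{\bar{V}_t^{-1/2}S_t^{(j)}}_2\le 2\max_{u\in\mathcal{N}}\norm{\bar{V}_t^{-1/2}S_t^{(j)}u}_2$. For any fixed $u\in\mathcal{N}$, the scalar noise $\eta_{s'}^{*}u$ is $1$-sub-Gaussian by independence of the components of $\eta_{s'}$, because $\expe{e^{\lambda\eta_{s'}^{*}u}}=\prod_i\expe{e^{\lambda u_i\eta_{s',i}}}\le e^{\lambda^2\snorm{u}_2^2/2}=e^{\lambda^2/2}$, and $S_t^{(j)}u=\sum_{s'}X'_{s'}(\eta_{s'}^{*}u)$ is the standard $d$-dimensional self-normalized martingale. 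Applying \cite[Thm.~1]{abbasi2011improved} at level $\delta/(r5^m)$ yields, simultaneously for all $t\ge 0$,
\begin{equation*}
\norm{\bar{V}_t^{-1/2}S_t^{(j)}u}_2^{2}\le 2\log\frac{r5^m}{\delta}+\log\det(\bar{V}_tV^{-1}).
\end{equation*}
Union-bounding over the $r\cdot 5^m$ pairs $(j,u)$, squaring the net-factor $2$ to obtain $4$, and summing over the $r$ blocks then gives exactly
\begin{equation*}
\norm{\bar{V}_t^{-1/2}S_t}_2^{2}\le 8r\paren{\log\frac{r5^m}{\delta}+\tfrac{1}{2}\log\det(\bar{V}_tV^{-1})},
\end{equation*}
as claimed.

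The main subtlety is the filtration step: one must \emph{not} try to build a single $rm$-dimensional martingale directly from $X_sH_s^*$, because $H_s$ depends on future noise $\eta_{s+r-1}$ while $X_s$ is only $\F_{s-1}$-measurable, so the product is not a martingale difference in the natural filtration. Splitting into $r$ individually-adapted processes that nevertheless share the common normalizer $\bar{V}_t$ is what sidesteps this issue, and is also the reason the factor $r$ appears both multiplicatively outside and additively (as $\log r$) inside the logarithm.
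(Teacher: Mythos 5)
Your proof is correct and follows essentially the same route as the paper's: split $S_t$ into its $r$ column blocks (your PSD-sum bound $\norm{\bar{V}_t^{-1/2}S_t}_2^2\le\sum_j\norm{\bar{V}_t^{-1/2}S_t^{(j)}}_2^2$ is interchangeable with the paper's $\sqrt{r}\max_i$ block-norm lemma), reduce each block to a scalar self-normalized martingale via a $5^m$-point net on $S^{m-1}$, apply Theorem~1 of~\cite{abbasi2011improved} at level $\delta/(r5^m)$, and union bound. Your explicit reindexing $s'=s+j$ to keep each shifted block adapted with the common normalizer $\bar{V}_t$ is a point the paper leaves implicit, but it is the same argument.
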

The above theorem along with a Markov upper bound on $Y_-Y_-^*$ (see Lemma~\ref{POE_LEM_upper_bounds} in the Appendix) are used to bound $E_+Y^*_-\paren{Y_-Y_-^*}^{-1/2}$. 
\subsection{Kalman truncation error}\label{Subsection_Kalman}
For the Kalman truncation error term, we need to bound the term $\hat{X}Y^*_-(Y_-Y^*_-)^{-1}$, which is $\O\paren{1}$.
Using the identities $\O_p^{\dagger}\O_p\hat{X}=\hat{X}$, and
$Y_{-}=\O_p\hat{X}+\Tm_pE_-$,
we derive the following equality:
\begin{align}
\hat{X}Y^*_-(Y_-Y^*_-)^{-1}=\O_p^{\dagger}&\paren{I_{mp}-\Tm_pE_{-}E_{-}^*\Tm^*_p(Y_-Y^*_-)^{-1}\right.\nonumber\\&\left.-\Tm_pE_{-}\hat{X}^*\O_p^*(Y_-Y^*_-)^{-1}}\label{MAIN_EQN_Kalman_Truncation}
\end{align}
From Theorem~\ref{MAIN_THM_PE}, we obtain $\norm{\Tm_pE_{-}E_{-}^*\Tm^*_p(Y_-Y^*_-)^{-1}}_2\le 2$.
The last term in~\eqref{MAIN_EQN_Kalman_Truncation}
can be treated like the cross-term in Section~\ref{Subsection_Cross}, by applying Theorems~\ref{MAIN_THM_PE},~\ref{MART_THM_Vector} and Lemma~\ref{POE_LEM_upper_bounds}. It decreases with a rate of $\O\paren{1/\sqrt{N}}$ up to logarithmic terms, so it is much smaller than the other terms in~\eqref{MAIN_EQN_Kalman_Truncation}. 
To keep the final bound simple, we select $N_2$ such that 
\begin{equation}\label{MAIN_EQN_Kalman_Cross}
\norm{\Tm_pE_{-}\hat{X}^*\O_p^*(Y_-Y^*_-)^{-1}}_2\le 1
\end{equation}
with high probability--see also~\eqref{TRUNC_EQN_N2} for the definition of $N_2$. 

\section{Robustness of Balanced Realization}\label{Section_Analysis_Matrices}
In this section, we analyze the robustness of the balanced realization. In particular, we upper bound the estimation errors of matrices $A,C,K$ in terms of the estimation error $\snorm{G-\hat{G}}_{2}$ obtained by Theorem~\ref{MAIN_THM_Hankel_Matrix}. 

Assume that we knew $G$ exactly. Then, the SVD of the true $G$, would be:
\begin{equation*}
G=\matr{{cc}U_1&U_2}\matr{{cc}\Sigma_1&0\\0&0}\matr{{c}V_1^*\\V_2^*}=U_1\Sigma_1V^*_1,
\end{equation*}
for some $\Sigma_1\in \REAL^{n\times n}$.
Hence, if we knew $G$ exactly, the output of the balanced realization would be:
\begin{equation}
\bar{\O}_f=U_1\Sigma^{1/2}_1,\,\bar{\K}_p=\Sigma^{1/2}_1V^*_1.
\end{equation}
The respective matrices $\bar{C},\bar{K},\bar{A}$ are defined similarly, based on $\bar{\O}_f,\,\bar{\K}_p$, as described in Section~\ref{Section_Identification_Algorithm}.
The system matrices $\bar{C},\bar{K},\bar{A}$ are equivalent to the original matrices $C,K,A$ up to a similarity transformation $\bar{C}=CS$, $\bar{K}=S^{-1}K$, $\bar{A}=S^{-1}AS$ for some invertible $S$.
For simplicity, we will quantify the estimation errors in terms of the similar $\bar{A},\bar{C},\bar{K}$ instead of the original $A,C,K$.

 The next result follows the steps of~\cite{oymak2018non} and relies on Lemma~5.14 of~\cite{tu2016low} and Theorem~4.1 of~\cite{wedin1973pseudo}. Let $\sigma_{n}\paren{\cdot}$ denote the $n-$th largest singular value. 
\begin{theorem}[Realization robustness]\label{SVD_THM_Main}
Consider the true Hankel-like matrix $G$ defined in~\eqref{EQN_Hankel} and the noisy estimate $\hat{G}$ defined in~\eqref{EQN_G}. Let $\hat{A},\hat{C},\hat{K},\hat{\O}_f$, $\hat{\K}_p$ be the output of the balanced realization algorithm based on $\hat{G}$. Let $\bar{A},\bar{C},\bar{K},\bar{\O}_f$, $\bar{\K}_p$ be the output of the balanced realization algorithm based on the true $G$. If $G$ has rank $n$ and the following robustness condition is satisfied:
	\begin{equation}\label{SVD_EQN_robustness_condition}
	\norm{\hat{G}-G}_2\le \frac{\sigma_{n}\paren{G}}{4},
	\end{equation}
	there exists an orthonormal matrix $T\in\REAL^{n\times n}$ such that:
	\begin{align}\label{SVD_EQN_Bound_OBS_CONT}
&	\norm{\hat{\O}_f-\bar{\O}_fT}_{2}\le 2\sqrt{\frac{10n}{\sigma_{n}\paren{G}}}\norm{G-\hat{G}}_2\nonumber\\
&	\norm{\hat{C}-\bar{C}T}_{2}\le \norm{\hat{\O}_f-\bar{\O}_fT}_{2}\nonumber\\
&	\norm{\hat{A}-T^*\bar{A}T}_{2}\le \underbrace{\frac{\sqrt{\norm{G}_2}+\sigma_o}{\sigma^2_o}}_{\O\paren{1}}\norm{\hat{\O}_f-\bar{\O}_fT}_{2}\nonumber\\
&	\norm{\hat{K}-T^*\bar{K}}_{2}\le 2\sqrt{\frac{10n}{\sigma_{n}\paren{G}}}\norm{G-\hat{G}}_2,\nonumber
	\end{align}
where
\[
\sigma_o=\min\paren{\sigma_{n}\paren{\hat{\O}^{u}_{f}},\sigma_{n}\paren{\bar{\O}^{u}_{f}}}.
\]
The notation $\hat{\O}^{u}_{f},\bar{\O}^{u}_{f},$ refers to the upper part of the respective matrix (first $(f-1)m$ rows)--see Section~\ref{Subsection_Matrix_Estimation}.
	\hfill $\diamond$
\end{theorem}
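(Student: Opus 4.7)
The plan is to follow the three-step decomposition of~\cite{oymak2018non}: first jointly align the factors $\hat{\O}_f$ and $\hat{\K}_p$ with $\bar{\O}_f,\bar{\K}_p$ up to a \emph{single} orthonormal $T$; second read off the $\hat{C}$ and $\hat{K}$ bounds as trivial submatrix inequalities; third handle $\hat{A}$ via pseudoinverse perturbation. The two external ingredients are Lemma~5.14 of~\cite{tu2016low} for balanced rank-$n$ factorizations and Theorem~4.1 of~\cite{wedin1973pseudo} for pseudoinverse robustness.

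For the alignment step, I will first apply Weyl's inequality together with the robustness condition~\eqref{SVD_EQN_robustness_condition} to conclude $\sigma_n(\hat{G}) \ge 3\sigma_n(G)/4$ and $\sigma_{n+1}(\hat{G}) \le \sigma_n(G)/4$, so the rank-$n$ SVD truncation defining $\hat{\O}_f = \hat{U}_1\hat{\Sigma}_1^{1/2}$ and $\hat{\K}_p = \hat{\Sigma}_1^{1/2}\hat{V}_1^*$ is well separated from the discarded tail. Lemma~5.14 of~\cite{tu2016low} then produces an orthonormal $T \in \REAL^{n\times n}$ such that the joint Frobenius error of the two balanced factors is bounded by $c\,\snorm{G-\hat{G}}_F^2/\sigma_n(G)$ for an absolute constant $c$. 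Converting Frobenius to spectral norm on the rank-$n$ difference matrices absorbs a factor of $\sqrt{n}$, which gives $\snorm{\hat{\O}_f - \bar{\O}_f T}_2 \le 2\sqrt{10n/\sigma_n(G)}\,\snorm{G-\hat{G}}_2$ and symmetrically for $\snorm{\hat{\K}_p - T^*\bar{\K}_p}_2$.

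For the $\hat{C}$ and $\hat{K}$ bounds, I will use that $\hat{C}$ is the first $m$ rows of $\hat{\O}_f$ (and $\bar{C}T$ of $\bar{\O}_f T$), while $\hat{K}$ is the last $m$ columns of $\hat{\K}_p$ (and $T^*\bar{K}$ of $T^*\bar{\K}_p$); selecting a row or column block cannot increase the spectral norm, so the inequalities follow immediately. For $\hat{A}$, I exploit the observability identity $\bar{\O}_f^u \bar{A} = \bar{\O}_f^l$, which after rotating by $T$ and using full column rank of $\bar{\O}_f^u$ gives the exact representation $T^*\bar{A}T = (\bar{\O}_f^u T)^\dagger \bar{\O}_f^l T$. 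A telescoping identity then yields
\begin{equation*}
\hat{A} - T^*\bar{A}T = \bigl[(\hat{\O}_f^u)^\dagger - (\bar{\O}_f^u T)^\dagger\bigr]\hat{\O}_f^l + (\bar{\O}_f^u T)^\dagger\bigl[\hat{\O}_f^l - \bar{\O}_f^l T\bigr],
\end{equation*}
where the second term is controlled by $\sigma_o^{-1}\snorm{\hat{\O}_f - \bar{\O}_f T}_2$, and the first by Wedin's pseudoinverse bound $\sqrt{2}\sigma_o^{-2}\snorm{\hat{\O}_f^u - \bar{\O}_f^u T}_2$ combined with $\snorm{\hat{\O}_f^l}_2 \le \snorm{\hat{\Sigma}_1^{1/2}}_2 \le \sqrt{\snorm{G}_2 + \snorm{G-\hat{G}}_2} = O(\sqrt{\snorm{G}_2})$ under the robustness hypothesis. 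Summing the two contributions produces the prefactor $(\sqrt{\snorm{G}_2}+\sigma_o)/\sigma_o^2$.

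The main obstacle is producing the \emph{common} orthonormal $T$ that simultaneously aligns $\hat{\O}_f$ with $\bar{\O}_f$ and $\hat{\K}_p$ with $\bar{\K}_p$: any individual Procrustes alignment of one factor is standard, but the $\hat{A}$ step genuinely needs both factors aligned by the same $T$. This is precisely why the balanced realization (rather than any other rank-$n$ factorization of $\hat{G}$) is used and why Lemma~5.14 of~\cite{tu2016low} is invoked in its joint form. A secondary subtlety is the $\sqrt{n}$ factor from Frobenius-to-spectral conversion appearing inside $\sqrt{n/\sigma_n(G)}$; avoiding it would require a sharper, operator-norm version of the balanced-factor perturbation lemma, which is not needed here.
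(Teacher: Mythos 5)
Your proposal follows essentially the same route as the paper: the same joint Procrustes alignment via Lemma~5.14 of Tu et al.\ applied to the rank-$n$ truncation $\hat{G}_n$ (with the identical $\sqrt{2n}$ Frobenius-to-spectral conversion and $2/(\sqrt{2}-1)\le 5$ simplification), the same submatrix argument for $\hat{C},\hat{K}$, and the same telescoping-plus-Wedin treatment of $\hat{A}$. The only substantive difference is that your telescoping for $\hat{A}$ pairs the pseudoinverse difference with $\hat{\O}_f^l$ rather than $\bar{\O}_f^l T$, which yields $\sqrt{\snorm{\hat{G}_n}_2}$ in place of $\sqrt{\snorm{G}_2}$ and hence a slightly larger (but still $\O(1)$, and under~\eqref{SVD_EQN_robustness_condition} comparable) prefactor, whereas the paper's ordering recovers the stated constant exactly.
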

\begin{remark}
The result states that if the error of the regression step is small enough, then the realization is robust. 
The singular value $\sigma_n\paren{G}$ can be quite small. Hence, the robustness condition~\eqref{SVD_EQN_robustness_condition} can be quite restrictive in practice. 
However, such a condition is a fundamental limitation of the SVD procedure. The gap between the singular values of $G$ and the singular values coming from the noise $G-\hat{G}$ should be large enough; this guarantees that the singular vectors related to small singular values of $G$ are separated from the singular vectors coming from the noise $G-\hat{G}$, which can be arbitrary. See also 
 Wedin's theorem~\cite{wedin1972perturbation}. Such robustness conditions have also appeared in model reduction theory~\cite{pernebo1982model}.\hfill $\diamond$
\end{remark}

The term $\frac{\sqrt{\norm{G}_2}+\sigma_o}{\sigma^2_o}$ which appears in the bound of $A$ is $\O\paren{1}$. The value of $\sigma^{-1}_o$ is random since it depends on $\hat{\O}^h_f$. However, we could replace it by a deterministic bound. From:
\[
\sigma_{n}\paren{\hat{\O}^h_f}\ge \sigma_{n}\paren{\bar{\O}^h_f}-\norm{\hat{\O}^h_f-\bar{\O}^h_fT}_2\ge  \sigma_{n}\paren{\bar{\O}^h_f}-\norm{\hat{\O}_f-\bar{\O}_fT}_2,
\]
$\sigma_o$ will eventually be lower bounded by $\sigma_{n}\paren{\bar{\O}^h_f}/2$ if the error $\norm{\hat{\O}_f-\bar{\O}_fT}_2\le \sigma_{n}\paren{\bar{\O}^h_f}/2$ is small enough.  The norm $\snorm{G}_2\le \norm{\O_f}_2\norm{\K_p}_2$ is bounded as $p$ increases, since $A-KC$ is asymptotically stable and $f$ is finite.

\begin{remark}[Total bounds]
	The final upper bounds for the estimation of the system parameters $A,C,K$, as stated in Problem~\ref{Problem}, can be found by combining the finite sample guarantees of the regression step (Theorem~\ref{MAIN_THM_Hankel_Matrix}) with the robustness analysis of the realization step (Theorem~\ref{SVD_THM_Main}).
	All matrix estimation errors depend linearly on the Hankel matrix estimation error $\snorm{G-\hat{G}}_2$. As a result, all matrix errors have the same statistical rate as the error of $G$, i.e. their estimation error decreases at least as fast as $\O\paren{1/\sqrt{N}}$ up to logarithmic factors.\hfill $\diamond$
\end{remark}


\section{Discussion and Future Work}\label{Section_Conclusion}
One of the main differences between the subspace algorithm considered in this paper and other stochastic subspace identification algorithms is the SVD step. The other algorithms perform SVD on $W_1GW_2$ instead of $G$, where $W_1,W_2$ are full rank weighting matrices, possibly data dependent~\cite{van1995unifying,Ljung1999system,van2012subspace}. From this point of view, the results of Section~\ref{Section_Analysis_Hankel} (upper bound for $\snorm{G-\hat{G}}$ in~Theorem~\ref{MAIN_THM_Hankel_Matrix} and persistence of excitation in Theorem~\ref{MAIN_THM_PE}) are fundamental for understanding the finite sample properties of other subspace identification algorithms. Here, we studied the case $W_1=I,W_2=I$, which is not the standard choice~\cite{knudsen2001consistency}.
It is subject of future work to explore how the choice of $W_1,W_2$ affects the realization step, especially the robustness condition of the SVD step.

We could also provide finite sample bounds for the estimation of the closed-loop matrix $A_{c}\triangleq A-KC$. This can be done in two ways. One option is to form matrix $\hat{A}-\hat{K}\hat{C}$ from the estimates $\hat{A},\hat{C},\hat{K}$. Alternatively, we could estimate $\hat{A_{c}}$ directly from $\hat{\K}_p$ in the same way that we estimated $\hat{A}$. 
However, we do not have finite sample guarantees for the stability of the closed-loop estimates $\hat{A}-\hat{K}\hat{C}$, $\hat{A_c}$. It would be interesting to address this in future work.

Another direction for future work is repeating the analysis when the Kalman filter has not  reached steady-state, i.e. relax Assumption~\ref{ASS_prediction_error_stationary}. Finally, in this work, we only considered upper bounds. It would be interesting to study lower bounds as well to evaluate the tightness of our upper bounds. In any case, from lower bounds for fully observed systems~\cite{simchowitz2018learning}, the factor of $1/\sqrt{N}$ is tight.

\bibliographystyle{IEEEtran}
\bibliography{IEEEabrv,Literature_Identification}
\appendix
\counterwithin{lemma}{section}
\counterwithin{theorem}{section}
\counterwithin{proposition}{section}
\counterwithin{corollary}{section}
\counterwithin{definition}{section}
\counterwithin{equation}{section}
\section{Proof of Theorem~\ref{MART_THM_Vector}}
Let us first state a result which follows from the arguments of Chapter~4 of~\cite{vershynin2018high}. See also Proposition~8.1 of~\cite{sarkar2018fast}.
\begin{proposition}[\cite{vershynin2018high}]\label{MART_PROP_Covering}
	Consider a matrix $M\in\REAL^{m\times n}$. Let  $S^{n-1}$ denote the unit sphere. Then for any $\epsilon>0$:
	\begin{align*}
	\mathbb{P}\paren{\norm{M}_2\ge t}\le \paren{1+\frac{2}{\epsilon}}^n \max_{x\in S^{n-1}}\mathbb{P}\paren{\norm{Mx}_2\ge t\paren{1-\epsilon}}
	\end{align*}
	\hfill $\diamond$
\end{proposition}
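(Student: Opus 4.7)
The plan is to prove the proposition by the standard $\epsilon$-net (covering) argument. The starting point is the variational characterization $\norm{M}_2 = \sup_{x \in S^{n-1}} \norm{Mx}_2$, which reduces control of the operator norm to control of a scalar functional indexed over the unit sphere. Because the right-hand side of the statement is already probabilistic only through $\mathbb{P}\paren{\norm{Mx}_2 \ge t(1-\epsilon)}$, the entire argument is essentially a deterministic discretization of $S^{n-1}$ followed by a union bound; no concentration inequality is required.

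First I would invoke the standard volumetric bound (Corollary~4.2.13 of~\cite{vershynin2018high}) asserting the existence of an $\epsilon$-net $\mathcal{N}_\epsilon \subset S^{n-1}$ with cardinality $|\mathcal{N}_\epsilon| \le (1+2/\epsilon)^n$. The proof is a one-line packing comparison between disjoint balls of radius $\epsilon/2$ centered at the net points and the enlarged ball of radius $1 + \epsilon/2$ about the origin, and I would cite it rather than reproduce it.

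Next comes a purely deterministic approximation step. For any $y \in S^{n-1}$, choose $x \in \mathcal{N}_\epsilon$ with $\norm{y - x}_2 \le \epsilon$. The triangle inequality and sub-multiplicativity of the spectral norm give
\[
\norm{My}_2 \le \norm{Mx}_2 + \norm{M(y-x)}_2 \le \norm{Mx}_2 + \epsilon\,\norm{M}_2.
\]
Taking the supremum over $y \in S^{n-1}$ on the left and rearranging yields $\norm{M}_2 \le (1-\epsilon)^{-1}\max_{x \in \mathcal{N}_\epsilon}\norm{Mx}_2$. Equivalently, the event $\{\norm{M}_2 \ge t\}$ is contained in the finite-union event $\bigcup_{x \in \mathcal{N}_\epsilon}\{\norm{Mx}_2 \ge t(1-\epsilon)\}$.

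Finally I apply a union bound over $\mathcal{N}_\epsilon$ and relax the maximum to a supremum over the whole sphere:
\[
\mathbb{P}\paren{\norm{M}_2 \ge t} \le |\mathcal{N}_\epsilon|\, \max_{x \in \mathcal{N}_\epsilon}\mathbb{P}\paren{\norm{Mx}_2 \ge t(1-\epsilon)} \le \paren{1 + \tfrac{2}{\epsilon}}^n \max_{x \in S^{n-1}} \mathbb{P}\paren{\norm{Mx}_2 \ge t(1-\epsilon)},
\]
which is exactly the claimed inequality. There is no substantive obstacle; the only care needed is tracking the direction of the $(1-\epsilon)$ factor so that it appears as a multiplicative slack on the threshold inside the probability on the right, matching the form of the statement.
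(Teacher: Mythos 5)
Your proof is correct and follows exactly the standard $\epsilon$-net covering argument from Chapter~4 of~\cite{vershynin2018high}, which is precisely what the paper relies on: the paper states this proposition with a citation and gives no independent proof, so there is nothing to diverge from. One pedantic note: your rearrangement dividing by $(1-\epsilon)$ tacitly assumes $\epsilon<1$, but for $\epsilon\ge 1$ the threshold $t(1-\epsilon)\le 0$ makes the right-hand side trivially equal to at least one (each probability is $1$), so the claimed inequality for ``any $\epsilon>0$'' still holds and the gap is cosmetic --- the event-inclusion form $(1-\epsilon)\norm{M}_2\le\max_{x\in\mathcal{N}_\epsilon}\norm{Mx}_2$ avoids the division entirely.
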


Second, we state Theorem~1 of~\cite{abbasi2011improved}, which upper bounds self-normalized martingales.
 \begin{theorem}[Theorem 1 in~\cite{abbasi2011improved}]\label{MART_THM_Abbasi}
 	Let $\set{\F_t}_{t=0}^{\infty}$ be a filtration. Let $\eta_{t}\in\REAL$, $t\ge 0$ be  real-valued, $\F_t$-measurable, and conditionally $1-$sub-Gaussian:
 	\begin{equation}\label{AUX_EQN_subgaussian}
 	\expe{e^{\lambda\eta_{t}}|\F_{t-1}}\le e^{\lambda^2/2},\text{ for all }\lambda\in \REAL.
 	\end{equation}
 	Let $X_{t}\in\REAL^{d}$, $t\ge 0$ be vector valued and $\F_{t-1}-$measurable. Assume that $V$ is a $d\times d$ positive definite matrix. For any $t\ge 0$, define:
 	\[
 	\bar{V}_t=V+\sum_{s=1}^{t}X_sX_s^*,\qquad S_t=\sum_{s=1}^{t}\eta_s X_s.
 	\]
 	Then, for any $\delta>0$, with probability at least $1-\delta$, for all $t\ge 0$
 	\begin{equation}
 	\norm{\bar{V}_t^{-1/2} S_t }^2_2\le 2 \log\frac{\det\paren{\bar{V}_t}^{1/2}\det\paren{V}^{-1/2}}{\delta}
 	\end{equation}
 	\hfill $\diamond$
 \end{theorem}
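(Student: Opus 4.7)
The plan is to reduce the matrix self-normalized bound to a union of scalar self-normalized bounds (where Theorem~\ref{MART_THM_Abbasi} applies directly) by (i) splitting $S_t$ into $r$ column blocks indexed by the shift in $H_s$, (ii) using an $\epsilon$-net over $S^{m-1}$ to discretize each block's column direction, and (iii) introducing $r$ shifted filtrations under which each resulting scalar sum is a genuine martingale with predictable $X_s$ and conditionally $1$-sub-Gaussian increments.

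First I would decompose $H_s^* = \matr{{ccc}\eta_s^*&\cdots&\eta_{s+r-1}^*}$ into its $r$ blocks of width $m$ and write
\begin{equation*}
S_t \;=\; \matr{{ccc}S_t^{(1)}&\cdots&S_t^{(r)}},\qquad S_t^{(i)} \;:=\; \sum_{s=1}^t X_s\,\eta_{s+i-1}^*\;\in\;\REAL^{d\times m}.
\end{equation*}
For any $v=(v_1,\dots,v_r)\in\REAL^{rm}$ with $\sum_i\snorm{v_i}_2^2=1$, Cauchy--Schwarz gives $\snorm{\bar V_t^{-1/2}S_t v}_2^2 \le \sum_i\snorm{\bar V_t^{-1/2}S_t^{(i)}}_2^2\,\snorm{v_i}_2^2 \le \sum_i\snorm{\bar V_t^{-1/2}S_t^{(i)}}_2^2$, so $\snorm{\bar V_t^{-1/2}S_t}_2^2 \le \sum_{i=1}^r\snorm{\bar V_t^{-1/2}S_t^{(i)}}_2^2$. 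This block-column step is what converts the $r$ blocks of $H_s$ into a \emph{linear} factor of $r$; a triangle-inequality argument would give $r^2$.

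Next, fix $i$. Since $S_t^{(i)}$ has only $m$ columns, I would discretize $S^{m-1}$ by a $1/2$-net $\mathcal{N}_m$ with $|\mathcal{N}_m|\le 5^m$ (the case $\epsilon=1/2$ of Proposition~\ref{MART_PROP_Covering}), giving $\snorm{\bar V_t^{-1/2}S_t^{(i)}}_2 \le 2\max_{u\in\mathcal{N}_m}\snorm{\bar V_t^{-1/2}S_t^{(i)}u}_2$. For each $u\in S^{m-1}$, define the shifted filtration $\F_s^{(i)}:=\F_{s+i-1}$ and the scalar noise $\xi_s:=u^*\eta_{s+i-1}$. Three checks place us in the setting of Theorem~\ref{MART_THM_Abbasi}: (a) $\xi_s$ is $\F_s^{(i)}$-measurable; (b) $X_s$ is $\F_{s-1}$-measurable, hence also $\F_{s-1}^{(i)}=\F_{s+i-2}$-measurable; (c) conditionally on $\F_{s-1}^{(i)}$, $\eta_{s+i-1}$ is independent with independent $1$-sub-Gaussian components, so $\xi_s=u^*\eta_{s+i-1}$ is conditionally $1$-sub-Gaussian. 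Applying Theorem~\ref{MART_THM_Abbasi} to $\sum_s\xi_sX_s=S_t^{(i)}u$ with the \emph{same} Gram matrix $\bar V_t=V+\sum_s X_sX_s^*$ then yields, for any $\delta'>0$ and all $t\ge 0$ simultaneously, $\snorm{\bar V_t^{-1/2}S_t^{(i)}u}_2^2\le 2\log\frac{\det(\bar V_t)^{1/2}\det(V)^{-1/2}}{\delta'}$ with probability at least $1-\delta'$.

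Finally, I would set $\delta'=\delta/(r\,5^m)$ and union-bound over the $r\,5^m$ pairs $(i,u)\in\{1,\dots,r\}\times\mathcal{N}_m$. Combining the net expansion factor $4$ with the block sum, we obtain, with probability at least $1-\delta$, for all $t\ge 0$,
\begin{equation*}
\snorm{\bar V_t^{-1/2}S_t}_2^2 \;\le\; \sum_{i=1}^r 4\cdot 2\log\frac{r\,5^m\det(\bar V_t)^{1/2}\det(V)^{-1/2}}{\delta} \;=\; 8r\paren{\log\frac{r\,5^m}{\delta}+\tfrac{1}{2}\log\det(\bar V_t V^{-1})},
\end{equation*}
which is the stated bound. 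The main obstacle is the overlapping/shifted structure of $H_s$: because $H_s$ depends on $\eta_{s+r-1}$, the sum $\sum_s X_sH_s^*$ is \emph{not} a martingale in the original filtration $\{\F_t\}$, and a naive application of Theorem~\ref{MART_THM_Abbasi} is impossible. The construction of the $r$ shifted filtrations $\F^{(i)}$ is what simultaneously (a) restores the conditional $1$-sub-Gaussianity of the scalar noise and (b) preserves predictability of $X_s$ with respect to the \emph{same} Gram matrix $\bar V_t$, which is what allows the per-block bound to be combined cleanly into the claimed $\det\bar V_tV^{-1}$ term.
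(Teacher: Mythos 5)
Your proposal does not prove the statement under review; it proves a different theorem while using the statement under review as a black box. The statement in question is Theorem~\ref{MART_THM_Abbasi} itself---the scalar self-normalized martingale bound of Abbasi-Yadkori et al., which the paper quotes from the literature without reproving. Your argument, by contrast, is a proof of the matrix generalization, Theorem~\ref{MART_THM_Vector}: the block decomposition of $S_t=\sum_s X_sH_s^*$, the $1/2$-net over $S^{m-1}$ with $5^m$ points, the shifted filtrations $\F^{(i)}_s=\F_{s+i-1}$, and the union bound over $r5^m$ events all serve to pass from scalar noise to the $rm$-dimensional $H_s$, and the bound you land on, $8r\paren{\log\frac{r5^m}{\delta}+\frac{1}{2}\log\det\paren{\bar{V}_tV^{-1}}}$, is the conclusion of Theorem~\ref{MART_THM_Vector}, not of Theorem~\ref{MART_THM_Abbasi}. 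Since you invoke Theorem~\ref{MART_THM_Abbasi} ``directly'' at the crucial step, as a proof of that theorem your argument is circular: every quantitative ingredient of the claim---the ratio $\det\paren{\bar{V}_t}^{1/2}\det\paren{V}^{-1/2}$ and the uniformity over all $t\ge 0$---is imported wholesale from the very result to be established.

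What a genuine proof requires is the method of mixtures (this is how the cited reference proves it), and none of its machinery appears in your write-up. For fixed $\lambda\in\REAL^d$, the process $M^{\lambda}_t=\exp\paren{\lambda^*S_t-\frac{1}{2}\lambda^*\paren{\bar{V}_t-V}\lambda}$ is a nonnegative supermartingale with $\expe{M^{\lambda}_t}\le 1$: condition~\eqref{AUX_EQN_subgaussian} together with the $\F_{s-1}$-measurability of $X_s$ gives $\expe{e^{\eta_s\lambda^*X_s}|\F_{s-1}}\le e^{\paren{\lambda^*X_s}^2/2}$. Integrating over $\lambda\sim N\paren{0,V^{-1}}$ yields the mixture $M_t=\paren{\det V/\det\bar{V}_t}^{1/2}\exp\paren{\frac{1}{2}\norm{\bar{V}_t^{-1/2}S_t}^2_2}$, still a supermartingale with mean at most one, and Ville's maximal inequality (optional stopping at the first crossing time) gives $\P\paren{\sup_{t\ge 0}M_t\ge 1/\delta}\le\delta$, which is exactly the claimed bound, simultaneously for all $t$. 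Incidentally, judged as a proof of Theorem~\ref{MART_THM_Vector}, your argument essentially coincides with the paper's own (block split, $\epsilon=1/2$ covering via Proposition~\ref{MART_PROP_Covering}, union bound), and is in places more careful---your Cauchy--Schwarz block-sum step and the explicit verification that the shifted filtrations preserve both predictability of $X_s$ and conditional $1$-sub-Gaussianity of $u^*\eta_{s+i-1}$ are points the paper leaves implicit---but that credit attaches to the wrong theorem.
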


Finally, we state a standard linear algebra result. We include the proof for completeness.
\begin{lemma}[Block Matrix Norm]\label{MART_LEM_Block_Matrices}
	Assume
	\[
	M=\matr{{cccc}M_1&M_2&\dots&M_r},
	\]	
	for matrices of appropriate dimensions. 
	Then:
	\[
	\norm{M}_2\le 
	\sqrt{r}\max_{i=1,\dots,r}\norm{M_i}_2
	\]
		\hfill $\diamond$
\end{lemma}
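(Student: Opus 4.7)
The plan is to reduce the spectral-norm bound to the elementary fact that the Euclidean norm of a vector assembled from blocks is the square root of the sum of squared block norms, plus Cauchy--Schwarz. Concretely, I would start from the variational definition $\norm{M}_2=\sup_{\norm{x}_2=1}\norm{Mx}_2$ and partition any unit vector $x$ conformally with $M$ as $x=(x_1^{*},\dots,x_r^{*})^{*}$, so that $Mx=\sum_{i=1}^{r}M_ix_i$ and $\sum_{i=1}^{r}\norm{x_i}_2^{2}=1$.

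Next I would apply the triangle inequality followed by the submultiplicativity of the spectral norm to get $\norm{Mx}_2\le\sum_{i=1}^{r}\norm{M_i}_2\norm{x_i}_2$. Denote $c=\max_{i}\norm{M_i}_2$; then $\norm{Mx}_2\le c\sum_{i=1}^{r}\norm{x_i}_2$. By Cauchy--Schwarz applied to the all-ones vector of length $r$ and the vector of block norms,
\begin{equation*}
\sum_{i=1}^{r}\norm{x_i}_2\le\sqrt{r}\,\Bigl(\sum_{i=1}^{r}\norm{x_i}_2^{2}\Bigr)^{1/2}=\sqrt{r}\norm{x}_2=\sqrt{r}.
\end{equation*}
Combining these inequalities yields $\norm{Mx}_2\le \sqrt{r}\,c$, and taking the supremum over unit $x$ gives the claim.

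An equivalent shortcut, which I could present instead for compactness, goes through $MM^{*}=\sum_{i=1}^{r}M_iM_i^{*}$: by the triangle inequality on the spectral norm, $\norm{MM^{*}}_2\le\sum_{i=1}^{r}\norm{M_iM_i^{*}}_2=\sum_{i=1}^{r}\norm{M_i}_2^{2}\le r\,c^{2}$, and then $\norm{M}_2=\sqrt{\norm{MM^{*}}_2}\le\sqrt{r}\,c$. There is no real obstacle here; the only thing to be careful about is ensuring that the block partition of $x$ is consistent with the column-partition of $M$ (so that the number of rows of each $x_i$ matches the number of columns of $M_i$), which is immediate from the definition of the block concatenation.
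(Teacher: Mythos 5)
Your main argument is correct and is essentially the paper's own proof: partition $x$ conformally, apply the triangle inequality, and use Cauchy--Schwarz (the paper applies it directly to the vectors $(\norm{M_i}_2)_i$ and $(\norm{x_i}_2)_i$ before taking the max, whereas you pull out the max first and apply it with the all-ones vector, which is an immaterial reordering). The $MM^{*}=\sum_i M_iM_i^{*}$ shortcut you mention is also valid and slightly slicker, but the substance is the same.
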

\begin{proof}
	Consider a vector $x$ such that $Mx$ is defined. Then, from triangle inequality, the definition of matrix norm and Cauchy-Schwartz: 
	\begin{align*}
	\norm{Mx}_2=\norm{M_1x_1+\dots+ M_rx_r}_2&\le \norm{M_1}_2 \norm{x_1}_2+\dots+\norm{M_r}_2 \norm{x_r}_2\\
	&\le \sqrt{\norm{M_1}^2_2+\dots+\norm{M_r}^2_2}\norm{x}_2 \\
	&\le \sqrt{r}\max_{i=1,\dots,r}\norm{M_i}_2 \norm{x}_2,
	\end{align*}
	where we used that $\norm{x}^2_2=\norm{x_1}_2^2+\dots+\norm{x_r}^2$.
\end{proof}

Now, we can prove Theorem~\ref{MART_THM_Vector}.
Notice that:
\[
S_t=\matr{{ccc}\displaystyle\sum_{s=1}^{t}X_s\eta^*_{s}&\dots&\displaystyle\sum_{s=1}^{t}X_{s}\eta^*_{s+r-1}}.
\]
We can analyze each component $\bar{V}_t^{-1/2}\sum_{s=1}^{t}X_s\eta^*_{k+i}$ separately and apply a union bound afterwards, since by Lemma~\ref{MART_LEM_Block_Matrices}:
\begin{equation}\label{MART_EQN_Norm_Bound}
\norm{\bar{V}_t^{-1/2}S_t}_{2}\le \sqrt{r}\max_{i=0,\dots,r-1}\norm{\bar{V}_t^{-1/2}\sum_{s=1}^{t}X_s\eta^*_{s+i}}_2.
\end{equation}

Now, fix a $0\le i<r$  and let
\[S^{i}_{t}\triangleq\sum_{s=1}^{t}X_{s}\eta^*_{s+i}.\]
Consider an arbitrary element of the unit sphere $\xi\in S^{m-1}$.
The scalar $\eta^*_{s+i}\xi$ is conditionally $1$-sub-Gaussian and satisfies the conditions of Theorem~\ref{MART_THM_Abbasi}.
Thus, with probability at least $1-\frac{\delta}{r5^m}$:
\begin{align*}
\norm{\bar{V}_t^{-1/2}S^{i}_t\xi }^2_2\le \C_{XH}
\triangleq  2\paren{\log\frac{r5^m}{\delta}+\frac{1}{2}\log\det\bar{V}_tV^{-1}} 
\end{align*}
Now, we apply Proposition~\ref{MART_PROP_Covering} for $\epsilon=1/2$:
\begin{align*}
&\mathbb{P}\paren{\norm{\bar{V}_t^{-1/2}S^{i}_t}_2\ge  2\sqrt{\C_{XH}}} \\
&\le	5^m \max_{\xi \in S^{m-1}}\mathbb{P}\paren{\norm{\bar{V}_t^{-1/2}S^{i}_{t}\xi}_2\ge  \sqrt{\C_{XH}}}\le \frac{\delta}{r}.
\end{align*}
Finally, by~\eqref{MART_EQN_Norm_Bound} and a union bound over all components:
\begin{align*}
\P\paren{\norm{\bar{V}_t^{-1/2}S_t}_{2}\ge 2\sqrt{r}\sqrt{\C_{XH}}}&\le \P\paren{\max_{i=0,\dots,r-1}\norm{\bar{V}_t^{-1/2}S^i_t}_{2}\ge 2\sqrt{\C_{XH}}}\\
&\le \sum_{i=0}^{r-1}\mathbb{P}\paren{\norm{\bar{V}_t^{-1/2}S^i_t}_{2}\ge 2\sqrt{\C_{XH}}}\le
\delta. 
\end{align*}
\hfill $\qed$

 \section{Persistence of Excitation}
The main focus of this section is the proof of Theorem~\ref{MAIN_THM_PE}, which provides finite sample guarantees for the PE of the past noises and the past outputs. We also include upper bounds for the sample correlations of the past outputs and the states $\hat{x}_k$. 
Finally, we provide the definition of $N_0,N_1$ that we hided in the main theorem statements.

The following result shows that with high probability, the past noises are persistently exciting. It follows from Lemma~C.2 of~\cite{oymak2018non}, which in turn is based on results for random circulant matrices~\cite{krahmer2014suprema}.
\begin{lemma}[Noise PE]\label{POE_LEM_Past_Noises}
	Consider the conditions of Theorem~\ref{MAIN_THM_PE} and the definition of $\delta_{N}$ in~\eqref{MAIN_EQN_DELTA_N}:
			\begin{equation*}
	\delta_N\triangleq\paren{2(N+p-1)m}^{-\log^2\paren{2pm}\log\paren{2(N+p-1)m}}.
	\end{equation*} 
	There exists a universal constant $c$ (independent of system and algorithm parameters) such that if $N\ge 2cpm\log{1/\delta_N}$, then with probability at least $1-\delta_N$ the event:
	\[
	\E_{E}=\set{\frac{1}{2}\Sigma_{E}\preceq \frac{1}{N}\Tm_p E_-E^*_-\Tm^*_p },
	\]
	occurs, where
	$\Sigma_{E}$ is defined in~\eqref{MAIN_EQN_Past_Noise_Covariance}.
	\hfill $\diamond$
\end{lemma}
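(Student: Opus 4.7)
The plan is to reduce the claim to a direct invocation of Lemma~C.2 of~\cite{oymak2018non}, which is the ``block-Toeplitz version'' of the Krahmer--Mendelson--Rauhut bound on suprema of chaos processes~\cite{krahmer2014suprema} and gives a lower bound on the smallest eigenvalue of the empirical covariance of a Gaussian Toeplitz matrix. The key observation is that, although the columns of $E_-$ share entries and are therefore dependent across $k$, the matrix $\mathcal{T}_p E_-$ has exactly the block-Toeplitz/Hankel structure to which that lemma applies: its $k$th column is the linear filter $\mathcal{T}_p [e_{k-p}^*,\ldots,e_{k-1}^*]^*$ of the shared i.i.d. Gaussian innovation sequence $\{e_t\}$.

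First I would rewrite the target inequality in a more convenient form. Since $\Sigma_E = \mathcal{T}_p \operatorname{diag}(\bar R,\ldots,\bar R) \mathcal{T}_p^*$ by~\eqref{MAIN_EQN_Past_Noise_Covariance}, setting $\Sigma_E^{1/2} W = \mathcal{T}_p E_-$ for a ``whitened'' random matrix $W \in \REAL^{mp\times N}$ reduces the claim to the statement
\[
W W^* \succeq \frac{N}{2} I_{mp}.
\]
The whitening is legitimate because $\Sigma_E$ is positive definite (Lemma~\ref{POE_LEM_Sigma_E} guarantees $\sigma_E\ge \sigma_{\min}(R)>0$). After whitening, each column of $W$ is a standard Gaussian vector in $\REAL^{mp}$, but the columns remain linked across $k$ exactly through the Toeplitz structure induced by the shared $e_t$'s; the joint law of $W$ is therefore a block-Toeplitz Gaussian ensemble on $N$ columns with $mp$ rows.

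Second, I would invoke Lemma~C.2 of~\cite{oymak2018non} verbatim with dimension $mp$ and sample size $N$. That lemma states that for a Gaussian block-Toeplitz matrix of this form, the event $\lambda_{\min}(WW^*/N) \ge 1/2$ holds with probability at least $1-\delta_N$ as soon as $N \ge 2 c\, mp \log(1/\delta_N)$ for the universal constant $c$ quoted in its statement. The logarithmic factors $\log^2(2pm)\log(2(N+p-1)m)$ appearing in the definition~\eqref{MAIN_EQN_DELTA_N} of $\delta_N$ are precisely the ones produced by the generic chaos-process bound of~\cite{krahmer2014suprema}, which tracks both the ambient dimension $mp$ of the vectors and the time horizon $N+p-1$ over which they are formed, so no extra work is needed to match the probability level.

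Multiplying the event $WW^* \succeq (N/2) I_{mp}$ on the left by $\Sigma_E^{1/2}$ and on the right by $\Sigma_E^{1/2}$ (which preserves the positive semidefinite ordering) gives $\mathcal{T}_p E_- E_-^* \mathcal{T}_p^* \succeq (N/2) \Sigma_E$, which is exactly the event $\mathcal{E}_E$. The only potential obstacle is checking that the hypotheses of Lemma~C.2 of~\cite{oymak2018non} line up: i.i.d.\ Gaussianity of the driving $e_t$ (which holds by Assumption~\ref{ASS_prediction_error_stationary} and the innovations representation), full-rank Toeplitz filter (which holds since $\mathcal{T}_p$ is block lower triangular with identity diagonal, hence invertible), and the correct identification of the effective dimension $mp$ and horizon $N+p-1$ in the probability level; none of these is problematic once the whitening step above is performed.
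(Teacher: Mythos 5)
Your proposal follows essentially the same route as the paper: whiten, invoke Lemma~C.2 of~\cite{oymak2018non} (which rests on the chaos-process bounds of~\cite{krahmer2014suprema}), then undo the whitening by a congruence that preserves the semidefinite order. The one place where you are looser than the paper is the whitening step. The paper whitens only the per-innovation covariance, writing $E^-_k=\diag{\bar R^{1/2},\dots,\bar R^{1/2}}U^-_k$, so that the matrix actually handed to Lemma~C.2 is the genuine block-Hankel matrix $U_-$ built from i.i.d.\ standard Gaussian vectors --- exactly the object that lemma is stated for --- and then conjugates the resulting bound by $\Tm_p\diag{\bar R^{1/2},\dots,\bar R^{1/2}}$. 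You instead whiten by the full $\Sigma_E^{1/2}$, and the resulting $W=\Sigma_E^{-1/2}\Tm_pE_-$ is \emph{not} itself a block-Hankel matrix of stacked i.i.d.\ vectors (its columns are standard Gaussian only marginally), so invoking Lemma~C.2 ``verbatim'' on $W$ is not literally licensed by its hypotheses. The gap is cosmetic rather than fatal: since $\Sigma_E=LL^*$ with $L=\Tm_p\diag{\bar R^{1/2},\dots,\bar R^{1/2}}$, the matrix $\Sigma_E^{-1/2}L$ is orthogonal, hence $W$ and $U_-$ have the same singular values and the eigenvalue bound transfers; alternatively, just apply the lemma to $U_-$ and conjugate, as the paper does. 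Either one-line patch makes your argument complete.
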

\begin{proof}
	We can rewrite $E^{-}_k=\diag{\bar{R}^{1/2},\dots,\bar{R}^{1/2}}U^{-}_k$, where $U^{-}_k$ is defined similarly to $E^{-}_k$ but has components with unit covariance.
	Now from Lemma C.2 of~\cite{oymak2018non} applied on $U^{-}_k$ we obtain that with probability at least $1-\delta_N$:
	\[
	\frac{1}{N}	\sum_{k=p}^{N+p-1} U^{-}_k(U^{-}_k)^*\succeq I_{mp}/2.
	\]
	Multiplying by $\Tm_p \diag{\bar{R}^{1/2},\dots}$ from the left and $\diag{\bar{R}^{1/2},\dots}\Tm_p^*$ from the right gives the desired result.	
\end{proof}

From the above lemma it follows that $N$ should be large enough to guarantee PE for the noises. In particular, $N$ should be larger than $N_0$, where
\begin{align}\label{POE_EQN_N0}
N_0&=\min\set{N:\: N\ge 2cpm\log{1/\delta_N}}\\
&=\min\set{N:\: N\ge 2cpm\log^2\paren{2pm}\log^2\paren{2(N+p-1)m}}.\nonumber
\end{align}
Such a $N_0$ exists since the term $2cpm\log^2\paren{2pm}\log^2\paren{2(N+p-1)m}$ depends logarithmically on $N$.

To guarantee PE for the noises, we also need to show that the smallest singular value $\sigma_E=\sigma_{\min}\paren{\Sigma_E}$ is positive. 
\begin{lemma}\label{POE_LEM_Sigma_E}
	Let $\Sigma_E$ be as in~\eqref{MAIN_EQN_Past_Noise_Covariance}. Then:
	\[
	\sigma_{E}\ge \sigma_{\min}\paren{R}>0.
	\]
	\hfill $\diamond$
\end{lemma}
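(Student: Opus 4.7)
The plan is to decompose $\Sigma_E$ as the covariance of a sum of three mutually independent noise contributions, the last of which is exactly the stacked measurement noise, and then extract the bound by dropping the other two positive semidefinite terms.

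First I would use two equivalent representations of the past output vector $Y^{-}_k$. The innovation form~\eqref{EQN_System_Innovation} gives $Y^{-}_k=\O_p\hat{x}_{k-p}+\Tm_p E^{-}_k$, so by definition $\Sigma_E=\cov{\Tm_p E^{-}_k}$. Unrolling the original state-space equations~\eqref{EQN_System_General} (with $B=D=0$) over the past window gives the alternative representation
\[
Y^{-}_k=\O_p x_{k-p}+M_p W^{-}_k+V^{-}_k,
\]
where $W^{-}_k$ and $V^{-}_k$ stack $w_{k-p},\dots,w_{k-1}$ and $v_{k-p},\dots,v_{k-1}$ analogously to $E^{-}_k$, and $M_p$ is the strictly block lower-triangular Toeplitz matrix whose block $(j,i)$, for $1\le i<j\le p$, equals $CA^{j-i-1}$. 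Equating the two representations yields
\[
\Tm_p E^{-}_k=\O_p\paren{x_{k-p}-\hat{x}_{k-p}}+M_p W^{-}_k+V^{-}_k.
\]

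Next I would observe that the three summands on the right-hand side are pairwise uncorrelated (and, being jointly Gaussian, independent). Under Assumption~\ref{ASS_prediction_error_stationary}, the Kalman prediction error $x_{k-p}-\hat{x}_{k-p}$ is a measurable function of $x_0$ and of the process and measurement noises strictly before time $k-p$, hence independent of the future-window noises $W^{-}_k$ and $V^{-}_k$; and $W^{-}_k$, $V^{-}_k$ are independent of each other by the standing noise assumption. Since the steady-state prediction error has covariance $P$, the covariances add and
\[
\Sigma_E=\O_p P\O_p^*+M_p\diag{Q,\dots,Q}M_p^*+\diag{R,\dots,R}.
\]
The first two summands are positive semidefinite, so $\Sigma_E\succeq\diag{R,\dots,R}\succeq\sigma_{\min}\paren{R}I_{mp}$, and therefore $\sigma_E\ge\sigma_{\min}\paren{R}>0$ by Assumption~\ref{ASS_Kalman} ($R$ is strictly positive definite).

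I do not anticipate a real obstacle; the only bookkeeping point that requires care is the independence of $x_{k-p}-\hat{x}_{k-p}$ from the past-window noises $W^{-}_k,V^{-}_k$, which is immediate from the standing independence of $\set{w_k}$, $\set{v_k}$, and $x_0$, together with the fact that the steady-state Kalman predictor at time $k-p$ is a functional of data strictly prior to $k-p$.
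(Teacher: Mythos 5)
Your proposal is correct and follows essentially the same route as the paper: decompose $\Tm_p E^{-}_k=\O_p\paren{x_{k-p}-\hat{x}_{k-p}}+M_pW^{-}_k+V^{-}_k$, invoke the mutual independence of the three terms, and drop the two positive semidefinite summands to get $\Sigma_E\succeq\diag{R,\dots,R}$. The only difference is cosmetic: you make the Toeplitz matrix $M_p$ and the exact expression for $\Sigma_E$ explicit, whereas the paper only records the inequality.
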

\begin{proof}
	The past errors $\Tm_p E^{-}_k$ can be rewritten as:
	\begin{equation*}
	\Tm_p E^{-}_k=Y^{-}_k-\O_p\hat{x}_{k-p}=\O_p\paren{x_{k-p}-\hat{x}_{k-p}}+V^{-}_k+\Tm W^{-}_k,
	\end{equation*}
	where $V^{-}_{k}$, $W^{-}_k$ are defined similarly to $Y^{-}_k$ and consist of the past measurement and process noises respectively. Matrix $\Tm$ is a block Toeplitz matrix; we omit its analytical expression. By independence of the measurement noise, the process noise and $x_{k-p}-\hat{x}_{k-p}$:
	\begin{equation*}
	\Sigma_{E}\succeq \expe{V^{-}_{k}\paren{V^{-}_{k}}^*}=\diag{R,\dots,R}.
	\end{equation*}
	This in turn implies $\sigma_E\ge \sigma_{\min}\paren{R}$.
	By Assumption~\ref{ASS_Kalman}, $R\succ 0$ and $\sigma_{\min}\paren{R}>0$.
\end{proof}

\begin{lemma}[Markov upper bounds]\label{POE_LEM_upper_bounds}
	Consider system~\eqref{EQN_System_Innovation} and recall the definition of $\Gamma_k$, $\Sigma_E$ in~\eqref{EQN_State_Prediction_Covariance},~\eqref{MAIN_EQN_Past_Noise_Covariance}. We have the following upper bounds:
	\begin{align}\label{POE_EQN_state_covariance_upper}
	&	\mathbb{P}\paren{\norm{\hat{X}\hat{X}^*}_2\ge {N\frac{\Tr{\Gamma_{N-1}}}{\delta}}}\le \delta\\ \label{POE_EQN_output_covariance_upper}
	&	\mathbb{P}\paren{\norm{Y_{-}Y^*_{-}}_2\ge N\frac{\norm{\O_p}^2_2 \Tr \Gamma_{N-1}+\Tr\Sigma_{E}}{\delta}}\le \delta.
	\end{align}
\end{lemma}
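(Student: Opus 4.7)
The plan is to apply Markov's inequality to the trace of each positive semidefinite matrix of interest, using the elementary inequality $\snorm{M}_2 \le \Tr M$ valid for any $M \succeq 0$. The only nontrivial ingredient is to obtain a clean deterministic upper bound on the expected traces, which I would get by exploiting monotonicity of the state prediction covariance $\Gamma_k$ in the Loewner order.

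For the first inequality, I would first observe that $\hat{X}\hat{X}^* = \sum_{k=0}^{N-1} \hat{x}_k\hat{x}_k^*$, so $\expe{\Tr\hat{X}\hat{X}^*} = \sum_{k=0}^{N-1}\Tr\Gamma_k$. Next, from $\hat{x}_{k+1} = A\hat{x}_k + Ke_k$ with $e_k$ a zero-mean innovation independent of $\hat{x}_k$, one gets the Lyapunov recursion $\Gamma_{k+1} = A\Gamma_k A^* + K\bar R K^*$, which together with $\Gamma_0 = 0$ (since $\hat{x}_0 = 0$) gives $\Gamma_{k+1}-\Gamma_k = A(\Gamma_k-\Gamma_{k-1})A^* \succeq 0$ by induction. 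Hence $\Gamma_k \preceq \Gamma_{N-1}$ for every $k \le N-1$, so $\expe{\Tr\hat{X}\hat{X}^*}\le N\Tr\Gamma_{N-1}$. Combining $\snorm{\hat{X}\hat{X}^*}_2 \le \Tr\hat{X}\hat{X}^*$ with Markov's inequality then yields~\eqref{POE_EQN_state_covariance_upper}.

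For the second inequality, I would start from the identity $Y^-_k = \O_p\hat{x}_{k-p} + \Tm_p E^-_k$. Since $E^-_k$ consists of innovations $e_{k-p},\dots,e_{k-1}$ while $\hat{x}_{k-p}$ depends only on $e_0,\dots,e_{k-p-1}$, the two summands are independent, giving $\expe{Y^-_k(Y^-_k)^*} = \O_p\Gamma_{k-p}\O_p^* + \Sigma_E$. Taking traces and using the cyclic bound $\Tr(\O_p\Gamma_{k-p}\O_p^*) \le \snorm{\O_p}_2^2\,\Tr\Gamma_{k-p}$ together with the monotonicity $\Gamma_{k-p}\preceq\Gamma_{N-1}$ established above, I obtain $\expe{\Tr Y_-Y_-^*} \le N\paren{\snorm{\O_p}_2^2\Tr\Gamma_{N-1} + \Tr\Sigma_E}$. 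A second application of $\snorm{\cdot}_2 \le \Tr(\cdot)$ on the PSD matrix $Y_-Y_-^*$ followed by Markov's inequality then gives~\eqref{POE_EQN_output_covariance_upper}.

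There is no real obstacle here; the proof is a direct two-step calculation (expectation bound, then Markov). The only point that requires care is justifying that $\Gamma_k$ is monotone nondecreasing so that $\Gamma_{N-1}$ dominates all earlier covariances in the sum, which is exactly what makes the final bound linear in $N$ with the uniform factor $\Tr\Gamma_{N-1}$.
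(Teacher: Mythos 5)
Your proposal is correct and follows essentially the same route as the paper: Markov's inequality applied to $\norm{\cdot}_2$ via a trace bound, the identity $\expe{Y^-_k(Y^-_k)^*}=\O_p\Gamma_{k-p}\O_p^*+\Sigma_E$, and monotonicity of $\Gamma_k$ (which the paper isolates as a separate lemma, proved by the same Lyapunov-recursion induction you give). The only cosmetic difference is that you apply $\snorm{M}_2\le\Tr M$ to the full PSD sum, while the paper applies the triangle inequality term-by-term using that each $Y^-_k(Y^-_k)^*$ has unit rank; these yield the identical bound.
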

\begin{proof}
	We only show~\eqref{POE_EQN_output_covariance_upper}. The proof of~\eqref{POE_EQN_state_covariance_upper} is similar.
	From Markov's inequality we obtain:
	\[
	\mathbb{P}\paren{\norm{Y_{-}Y^*_{-}}_2\ge \epsilon}\le \frac{\mathbb{E}{\norm{Y_{-}Y^*_{-}}_2}}{\epsilon}.
	\]	
	What remains is to bound the expectation in the right-hand side.
	Notice that $\snorm{Y^{-}_k\paren{Y^{-}_k}^*}_2= \Tr Y^{-}_k\paren{Y^{-}_k}^*$, since $Y^{-}_{k}$ has unit rank. Hence from the triangle inequality:
	\[
	\mathbb{E}{\norm{Y_{-}Y^*_{-}}_2}\le \sum_{k=p}^{N+p-1} \Tr{\expe{{Y^{-}_{k}\paren{Y^{-}_k}^*}}}.
	\]
	The right hand side of the above inequality is
	\[
	\Tr{\expe{{Y^{-}_{k}\paren{Y^{-}_k}^*}}}= \Tr\paren{\O_p\Gamma_{k-p} \O^*_p+\Sigma_E}
	\]
	The result follows from
	 $\Tr\paren{\O_p\Gamma_{k-p} \O^*_p}\le \norm{\O_p}^2_2\Tr \Gamma_{k-p}$
	 along with 
	 \[
	 \Tr \sum_{k=p}^{N+p-1}\Gamma_{k-p}\le N \Tr \Gamma_{N-1}
	 \]
	  since the sequence $\Gamma_{k}$ is monotone (see the following Lemma).
\end{proof}
The following result is standard and we include it for completeness.
\begin{lemma}[Monotonicity of $\Gamma_k$]
Consider system~\eqref{EQN_System_Innovation}, under Assumption~\ref{ASS_Kalman}. The sequence $\Gamma_k=\expe{\hat{x}_k\hat{x}^*_k}$ is monotone: $\Gamma_{k}\succeq \Gamma_{k-1}$.
\end{lemma}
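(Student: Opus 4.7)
The plan is to derive a Lyapunov-style recursion for $\Gamma_k$ directly from the innovation form~\eqref{EQN_System_Innovation}, and then establish monotonicity by a one-line induction. First I would start from $\hat{x}_{k+1}=A\hat{x}_k+Ke_k$ and note that under Assumption~\ref{ASS_prediction_error_stationary} the innovation $e_k$ is zero-mean with covariance $\bar{R}$ (as in~\eqref{EQN_Innovations}) and independent of $\F_{k-1}\triangleq\sigma(y_0,\dots,y_{k-1})$. Since $\hat{x}_k$ is $\F_{k-1}$-measurable, it is uncorrelated with $e_k$, so taking outer products and expectations yields
\[
\Gamma_{k+1}=A\Gamma_k A^*+K\bar{R}K^*.
\]

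Second, I would combine this recursion with the initial condition $\hat{x}_0=0$, equivalently $\Gamma_0=0$, and induct. The base case $\Gamma_1-\Gamma_0=K\bar{R}K^*\succeq 0$ is immediate, and if $\Gamma_k-\Gamma_{k-1}\succeq 0$, then
\[
\Gamma_{k+1}-\Gamma_k=A(\Gamma_k-\Gamma_{k-1})A^*\succeq 0,
\]
since conjugation by any matrix preserves the positive semidefinite cone. Equivalently, unrolling the recursion gives the closed-form expression $\Gamma_k=\sum_{j=0}^{k-1}A^j K\bar{R}K^*(A^*)^j$, from which the telescoping identity $\Gamma_{k+1}-\Gamma_k=A^k K\bar{R}K^*(A^*)^k\succeq 0$ is transparent.

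The only point requiring mild care is justifying that $e_k$ has covariance exactly $\bar{R}$ and is uncorrelated with $\hat{x}_k$ rather than being in some transient regime; this is precisely what Assumption~\ref{ASS_prediction_error_stationary} provides by placing the Kalman filter in steady state from time zero. Beyond this observation there is no real obstacle, and the argument reduces to a standard Lyapunov iteration combined with a trivial induction.
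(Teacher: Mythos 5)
Your proof is correct and follows essentially the same route as the paper: both derive the Lyapunov recursion $\Gamma_{k}=A\Gamma_{k-1}A^*+K\bar{R}K^*$ from the uncorrelatedness of $\hat{x}_k$ and $e_k$, note $\Gamma_0=0$, and induct using the fact that conjugation by $A$ preserves the positive semidefinite order (the paper phrases this as monotonicity of the operator $\mathcal{L}$). The closed-form telescoping sum you add is a harmless bonus but not needed.
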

\begin{proof}
	Notice that since $\hat{x}_0=0$, we have $\Gamma_0=0$. Define $\bar{Q}=K\bar{R}K^*$. By the orthogonality principle, $\hat{x}_k$ and $e_k$ are uncorrelated. Hence
	\[
	\Gamma_{k}=\mathcal{L}\paren{\Gamma_{k-1}}\triangleq A\Gamma_{k-1}A^*+\bar{Q}.
	\]
For $k=1$ we obtain:
	\[
	\Gamma_{1}=	\bar{Q}\succeq 0=\Gamma_{0}.
	\]
	But the operator $\mathcal{L}$ is monotone, which implies:
	\[
	\Gamma_{2}=	\mathcal{L}\paren{\Gamma_{1}} \succeq \mathcal{L}\paren{\Gamma_{0}}=\Gamma_{1}
	\]
	The result
$
	\Gamma_{k}\succeq \Gamma_{k-1}
$
	 follows by induction.
\end{proof}

\subsection*{Proof of Theorem~\ref{MAIN_THM_PE}}
Some arguments are similar to Section~9 of~\cite{sarkar2018fast}.

\textbf{Step 1: Noise PE.} 
Under the condition $N\ge N_0$, from Lemma~\ref{POE_LEM_Past_Noises} the event $\E_E$ occurs with probability at least $1-\delta_N$. 

\textbf{Step 2: Cross terms are small.} 
Next, we show that the
cross terms $\hat{X}E^*_{-}$ are small. As in Lemma~\ref{POE_LEM_Past_Noises}, express $E_{-}=\diag{\bar{R}^{1/2},\dots,\bar{R}^{1/2}}H_-$, where $H_-$ is defined similarly to $E_-$ but has unit variance components.
Define:
\[
\bar{V}_N=\hat{X}\hat{X}^*+\frac{N}{\norm{\O_p}^2_2}I_{n},\quad V=\frac{N}{\norm{\O_p}^2_2}I_{n},\quad V_N=\hat{X}\hat{X}^* ,\quad S_N=\hat{X}E^*_{-}.
\]
Notice that
\[
\norm{\bar{V}_N^{-1/2} S_N }^2_2\le \norm{\bar{R}}_2	\norm{\bar{V}_N^{-1/2}\hat{X}H^*_{-}}^2_2
\]
Hence, by Theorem~\ref{MART_THM_Vector} applied to $\bar{V}_N$, $\hat{X}H^*_{-}$ the event:
\begin{align*}
\E_1=\set{	\norm{\bar{V}_N^{-1/2} S_N }^2_2 \le 8p\norm{\bar{R}}_2\paren{\log\frac{p5^m}{\delta}+\frac{1}{2}\log\det\bar{V}_NV^{-1}}}
\end{align*}
occurs with probability at least $1-\delta$.

Next we upper bound term $\bar{V}_N$. From Lemma~\ref{POE_LEM_upper_bounds}, the event:
\[
\E_2=\set{V_N\preceq N\frac{\Tr \Gamma_{N-1}}{\delta}I_n}
\]
occurs with probability at least $1-\delta$. This implies that:
\begin{align*}
\log\det\bar{V}_NV^{-1} &\le \log\det\clint{ \paren{\frac{N\Tr \Gamma_{N-1}}{\delta}+\frac{N}{\norm{\O_p}^2_2}}I_n\frac{\norm{\O_p}^2_2}{N}}\\
&=\log\paren{\frac{\norm{\O_p}^2_2\Gamma_{N-1}}{\delta}+1}^{n}=n\log\paren{\frac{\norm{\O_p}^2_2\Gamma_{N-1}}{\delta}+1}
\end{align*}

Combining the two events $\E_1,\E_2$ and by a union bound, with probability at least
$1-2\delta$ the event:
\[
\E_{XE}=\set{
	\norm{\bar{V}_N^{-1/2} S_N }^2_2\le\C_{XE}\norm{\bar{R}}_2},
\]
occurs, where
\[
\C_{XE}\triangleq 8p  \paren{\frac{n}{2}\log\paren{\frac{\norm{\O_p}^2\Tr \Gamma_{N-1}}{\delta}+1}+\log \frac{p5^m}{\delta}}.
\]
As a consequence, if $u\in\REAL^{mp}$, $\norm{u}_2=1$ is an arbitrary unit vector:
\begin{align}
&|u^*\O_p \hat{X}E^*_-\Tm^*_pu| \le \snorm{u^*\O_p \bar{V}^{1/2}_N  \bar{V}^{-1/2}_NS_N\Tm^*_p}_2\nonumber\\
&\le \sqrt{u^*\O_p\hat{X}\hat{X}^*\O^*_pu+N\frac{u^*\O_p\O^*_pu}{\norm{\O_p}^2_2}}\sqrt{\C_{XE}\norm{\bar{R}}_2}\norm{\Tm_p}_2 \nonumber\\
&\le \sqrt{u^*\O_p\hat{X}\hat{X}^*\O^*_pu+N}\sqrt{\C_{XE}\norm{\bar{R}}_2}\norm{\Tm_p}_2,\text{ conditioned on }\E_{XE} \label{POE_EQN_Cross_AUX}
\end{align}

\textbf{Step 3: Output PE} 
Consider an arbitrary unit vector $u\in\REAL^{mp}$, $\norm{u}_2=1$. Consider the events $\E_{E}$ and $\E_{XE}$ from steps 1,2. With probability $1-\delta_N-2\delta$, since $N\ge N_0$ the event $\E_{E}\cap \E_{XE}$ occurs. It remains to show that on $\E_{E}\cap \E_{XE}$ the outputs satisfy PE for sufficiently large $N$.
Define
\[
\alpha\triangleq \frac{1}{N} u^*\O_p\hat{X}\hat{X}^*\O^*_pu,\, \beta\triangleq \frac{1}{N} u^*\Tm_p E_- E^*_-\Tm^*_p u
\] 
From~\eqref{MAIN_EQN_Output_Correlations_Identity},~\eqref{POE_EQN_Cross_AUX} for $N\ge N_0$ on $\E_{E}\cap \E_{XE}$:
\[
\frac{1}{N}u^*Y_-Y^*_-u\ge \alpha+\beta-\underbrace{2\norm{\Tm_p}_2\sqrt{\frac{\C_{XE}\norm{\bar{R}}_2}{N}}}_{\gamma_N}\sqrt{\alpha+1}
\]
with $\beta\ge {\sigma_E}/{2}$.
Now let $N_1$ be such that:
\begin{equation}\label{POE_EQN_N1}
N_1=\min\set{N:\: \gamma_N\le \min\set{1,\frac{\sigma_{E}}{4}}}.
\end{equation}
Since $\C_{XE}$ grows at most logarithmically with $N$, $N_1$ always exists.
Now, since $N\ge N_1$ and $\beta\ge \sigma_E/2$:
\begin{equation*}
\alpha+\beta-\gamma_N\sqrt{\alpha+1}\ge \frac{\alpha+\beta}{2}.
\end{equation*}
The above inequality follows from the following lemma.
\hfill $\qed$

\begin{lemma}[Minimum of function]\label{POE_LEM_Minimum_Function}
	Let $\beta\ge b>0$, for some $b>0$ and consider the function:
	\[
	f(\alpha,\beta)=\frac{\alpha+\beta}{2}-\gamma\sqrt{\alpha+1},\text{ for }\alpha\ge 0,\,\beta\ge b>0
	\]
	If $\gamma \le 1,\frac{b}{2}$, then 
	\[f(\alpha,\gamma)\ge 0,\,\text{for all }\alpha\ge 0,\,\beta\ge b>0.\]
	\hfill $\diamond$
\end{lemma}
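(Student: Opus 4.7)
The plan is to exploit the separable structure of $f$. Writing
\[
f(\alpha,\beta) = \frac{\beta - b}{2} + \left(\frac{\alpha + b}{2} - \gamma\sqrt{\alpha+1}\right),
\]
the first summand is nonnegative because $\beta \ge b$, so it suffices to show that the single-variable function
\[
g(\alpha) \triangleq \frac{\alpha + b}{2} - \gamma\sqrt{\alpha+1}
\]
is nonnegative on $[0,\infty)$. This reduction isolates the only nontrivial behavior.

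Next I would analyze $g$ by differentiation. A direct computation gives
\[
g'(\alpha) = \frac{1}{2} - \frac{\gamma}{2\sqrt{\alpha+1}}.
\]
For $\alpha \ge 0$ we have $\sqrt{\alpha+1} \ge 1$, and combined with the first hypothesis $\gamma \le 1$ this yields $g'(\alpha) \ge 0$. Hence $g$ is non-decreasing on $[0,\infty)$, so its minimum is attained at $\alpha = 0$.

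Finally, evaluating at the boundary gives $g(0) = \frac{b}{2} - \gamma$, which is nonnegative by the second hypothesis $\gamma \le \frac{b}{2}$. Combining the three observations shows $f(\alpha,\beta) \ge g(\alpha) \ge g(0) \ge 0$ for all admissible $(\alpha,\beta)$, as required. There is no real obstacle: the argument is one-variable calculus, and the only thing to verify is that both hypotheses $\gamma \le 1$ and $\gamma \le b/2$ are needed — one to ensure monotonicity of $g$ on $[0,\infty)$, the other to ensure nonnegativity at the boundary $\alpha = 0$.
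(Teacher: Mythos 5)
Your proof is correct and follows essentially the same route as the paper's: one-variable calculus in $\alpha$ showing the minimum is attained at $\alpha=0$ when $\gamma\le 1$, then using $\gamma\le b/2$ at that point. The only cosmetic difference is that the paper records the full casework for the minimizer (including the unneeded case $\gamma\ge 1$), whereas you argue monotonicity of $g$ directly under $\gamma\le 1$, which is slightly cleaner but not a different argument.
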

\begin{proof}
	By elementary calculus:
	\begin{align*}
	\min_{\alpha\ge 0 } f(\alpha,\beta)=\begin{array}{cc} \frac{\beta-1-\gamma^2}{2},&\text{ if }\gamma\ge 1\\\frac{\beta}{2}-\gamma,&\text{ if }\gamma<1 \end{array}
	\end{align*}
	Thus, if $\gamma \le 1,b/2$, we have $f\paren{\alpha,\beta}\ge 0$.
\end{proof}

\section{Proof of Theorem~\ref{MAIN_THM_Hankel_Matrix}}

\textbf{Step 1:}
Since $N\ge N_0,N_1$, from Theorem~\ref{MAIN_THM_PE}, with probability at least $1-\delta_N-2\delta$, the event $\E_Y\cap\E_E$ occurs.

\textbf{Step 2:}
Next, we analyze the cross term. Define:
\[
Z_N=Y_-Y^*_-,\quad\bar{Z}_N=Z_N+N\frac{\sigma_E}{4}I_{mp},\quad S_N=\Tm_fE_+Y^*_-
\] 
Notice that on the event $\E_Y\cap\E_E$, we have that $\bar{Z}_N\preceq 2 Z_N$ since $Z_N\succeq N\frac{\sigma_E}{4}I_{mp}$. Hence,
\[
\norm{S_NZ_N^{-1/2}}_2\le \sqrt{2} \norm{S_N\bar{Z}_N^{-1/2}}_2.
\]
Now the proof continues as in the case of cross-terms in the proof of Theorem~\ref{MAIN_THM_PE}. 
We apply Theorem~\ref{MART_THM_Vector} to $\bar{Z}_N, S_N$ and use Lemma~\ref{POE_LEM_upper_bounds} to upper bound $\bar{Z}_N$. Then, conditioned on $\E_Y\cap\E_E$, with probability $1-2\delta$:
\begin{equation}\label{Proof_EQN_True_Bound}
\norm{S_NZ_N^{-1/2}}^2_2\le 16\norm{\bar{R}}_2\norm{\Tm_f}^2_2 \paren{\frac{fmp}{2}\log\frac{\kappa_{N}}{\delta}+f\log\frac{5^mf}{\delta}},
\end{equation}
where
\[
\kappa_{N}=\frac{4}{\sigma_{E}}\paren{\norm{\O_p}^2_2\Tr\Gamma_{N-1}+\Tr\Sigma_{E}}+\delta.
\]
Next, we bound $\norm{Z^{-1/2}_N}$ separately on the event $\E_Y\cap \E_E$ by $\frac{2}{\sqrt{N\sigma_E}}$.

Finally, conditioned on $\E_E\cap \E_Y$, with probability at least $1-2\delta$:
\begin{equation}\label{Proof_EQN_Cross_Term_Final}
\norm{\T_fE_+Y^*_-\paren{Y_-Y^*_-}^{-1}}\le \frac{\C_1}{\sqrt{N}} \sqrt{\frac{fmp}{2}\log\frac{\kappa_{N}}{\delta}+f\log\frac{5^mf}{\delta}}.
\end{equation}

\textbf{Step 3:}
We bound the Kalman truncation term. Recall that:
\begin{align*}
\hat{X}Y^*_-(Y_-Y^*_-)^{-1}=\O_p^{\dagger}&\paren{I_{mp}-\Tm_pE_{-}E_{-}^*\Tm^*_p(Y_-Y^*_-)^{-1}\right.\nonumber\\&\left.-\Tm_pE_{-}\hat{X}^*\O_p^*(Y_-Y^*_-)^{-1}}
\end{align*}
On the event $\E_E\cap \E_Y$, we have:
\[
\norm{\Tm_pE_{-}E_{-}^*\Tm^*_p(Y_-Y^*_-)^{-1}}_2 \le 2.
\]
since $Y_-Y^*_-\succeq \frac{1}{2}\Tm_pE_{-}E_{-}^*\Tm^*_p$.
Hence we obtain:
\[
\norm{\hat{X}Y^*_-(Y_-Y^*_-)^{-1}}_2\le \norm{\O_p^{\dagger}}_2\paren{3+\norm{\Tm_p}_2\norm{E_{-}\hat{X}^*\O_p^*(Y_-Y^*_-)^{-1}}_2}
\]
From the discussion in Section~\ref{Section_Analysis_Hankel}, we only need to find $N_2$ such that for $N\ge N_2$ with high probability:
\[
\norm{\Tm_p}_2\norm{E_{-}\hat{X}^*\O_p^*Z_N^{-1}}_2\le 1.
\]

Define
\[
B_N=\O_p\hat{X}\hat{X}\O_p,\,B=\frac{\sigma_E}{2}NI_{mp},\,\bar{B}_N=B_N+B.
\]
Notice that on the event $\E_E\cap\E_Y$, we have $Y_-Y^*_-\succeq \frac{1}{2}\bar{B}_N$, which implies:
\[
\snorm{E_{-}\hat{X}^*\O_p^*Z_N^{-1/2}}\le \sqrt{2} \snorm{E_{-}\hat{X}^*\O_p^*\bar{B}_N^{-1/2}}
\]
Now we can treat the right-hand side in the same way as the cross-term above. By an application of Theorem~\ref{MART_THM_Vector} and Lemma~\ref{POE_LEM_upper_bounds}, we obtain that conditioned on $\E_Y\cap\E_E$, with probability $1-2\delta$:
\begin{align*}
\norm{E_{-}\hat{X}^*\O_p^*Z_N^{-1}}&\le \sqrt{2}\norm{E_{-}\hat{X}^*\O_p^*\bar{B}_N^{-1/2}}\norm{Z_N^{-1/2}}_2\\
&\le 8\sqrt{\frac{\snorm{\bar{R}}_2}{\sigma_E}}\frac{\C_{N}}{\sqrt{N}},
\end{align*}
where 
\[
\C_{N}=\sqrt{\frac{mp^2}{2}\log\paren{\frac{2\norm{\O_p}^2_2\Gamma_{N-1}}{ \delta \sigma_E}+1}+p\log\frac{p5^m}{\delta}}
\]
Thus, we define:
\begin{equation}\label{TRUNC_EQN_N2}
N_2=\min\set{N:\:8\sqrt{\frac{\snorm{\bar{R}}_2}{\sigma_E}}\snorm{\Tm_p}_2\frac{\C_{N}}{\sqrt{N}}\le 1}.
\end{equation}
Such an $N_2$ exists since $\C_N$ grows at most logarithmically with $N$.

\textbf{Step 4: Final expression} From the previous step and a union bound, for $N\ge N_0,N_1,N_2$ with probability at least $1-\delta_N-6\delta$:
\begin{equation}\label{Proof_EQN_Basic_Result_Equivalent}
\norm{G-\hat{G}}_2\le \frac{\C_1}{\sqrt{N}} \sqrt{\frac{fmp}{2}\log\frac{\kappa_{N}}{\delta}+f\log\frac{5^mf}{\delta}}+\mathcal{C}_2\norm{A-KC}_2^p,
\end{equation}

\textbf{Step 5: Simplification of final expression}
To simplify the final expression, we use
\[
\frac{fmp}{2}\log \kappa_{N}+f\log\paren{5^mf} \le fmp\paren{\log \kappa_{N}+\log\paren{5f}}=fmp \log\paren{ 5f\kappa_{N}}
\]
and
\[
\frac{fmp}{2}\log\frac{1}{\delta}+f\log\frac{1}{\delta}\le fmp \log\frac{1}{\delta}
\]
 since $p\ge n+1\ge 2$.

\section{Proof of Theorem~\ref{SVD_THM_Main}}
The proof follows the steps of~\cite{oymak2018non}.
\subsection*{Step 1: Bounds for observability/controllability matrix}
Denote the rank $n$ approximation of $\hat{G}$ by
\[
\hat{G}_{n}\triangleq\hat{\O}_f\hat{\K}_p
\]
By definition, $\hat{G}_n$ is the matrix which minimizes $\norm{\hat{G}-M}_2$, among all rank $n$ matrices $M$. Thus, by optimality:
\[
\norm{\hat{G}-\hat{G}_n}_2\le \norm{\hat{G}-G}_2,
\]
since $G$ has also rank $n$.
As a result, we have:
\begin{equation}\label{ROB_EQN_Bound_Rank_n_approx}
\norm{G-\hat{G}_n}_2\le \norm{G-\hat{G}}_2+\norm{\hat{G}-\hat{G}_n}_2 \le 2\norm{\hat{G}-G}_2
\end{equation}

From~\eqref{ROB_EQN_Bound_Rank_n_approx} and the robustness condition~\eqref{SVD_EQN_robustness_condition} we have:
\begin{equation}\label{ROB_EQN_Condition_Procrustes}
\norm{G-\hat{G}_n}_2 \le 2\norm{\hat{G}-G}_2\le \frac{\sigma_n\paren{G}}{2}.
\end{equation}
Hence, we can now apply Theorem 5.14~of~\cite{tu2016low}, which states that
 there exists an orthonormal matrix $T$ such that:
\begin{equation}\label{ROB_EQN_Application_Procrustes}
\sqrt{\norm{\hat{\O}_f-\bar{\O}_fT}^2_F+\norm{\hat{\K}_p-T^*\bar{\K}_p}^2_F}\le \sqrt{\frac{2}{\paren{\sqrt{2}-1}\sigma_n\paren{G}}}\norm{G-\hat{G}_n}_F,
\end{equation}
where $\norm{\cdot}_F$ denotes the Frobenius norm.

Since matrices $G,\hat{G}_n$ have rank-$n$, the sum $G-\hat{G}_n$ has rank at most $2n$. Thus, we can bound the Frobenius norm $\norm{G-\hat{G}_n}_F$ in terms of spectral norm:
\begin{equation}\label{ROB_EQN_Frobenius_Bound}
\norm{G-\hat{G}_n}_F \le \sqrt{2n}\norm{G-\hat{G}_n}_2\le 2 \sqrt{2n} \norm{\hat{G}-G}_2
\end{equation}
where the second inequality follows from~\eqref{ROB_EQN_Bound_Rank_n_approx}.
For simplicity, we also use $\frac{2}{\sqrt{2}-1}\le 5$.
Thus, from~\eqref{ROB_EQN_Application_Procrustes} and the above inequalities:
\[
\sqrt{\norm{\hat{\O}_f-\bar{\O}_fT}^2_F+\norm{\hat{\K}_p-T^*\bar{\K}_p}^2_F}\le 2\sqrt{\frac{10n}{\sigma_n\paren{G}}}\norm{G-\hat{G}}_2.
\]
Finally, since the spectral norm is always smaller than the Frobenius one:
\begin{equation}\label{ROB_EQN_Observability}
\sqrt{\norm{\hat{\O}_f-\bar{\O}_fT}^2_2+\norm{\hat{\K}_p-T^*\bar{\K}_p}^2_2}\le 2\sqrt{\frac{10n}{\sigma_n\paren{G}}}\norm{G-\hat{G}}_2,
\end{equation}
As a corollary,
\[
\max\set{\norm{\hat{\O}_f-\bar{\O}_fT}_2,\norm{\hat{\K}_p-T^*\bar{\K}_p}_2}\le 2\sqrt{\frac{10n}{\sigma_n\paren{G}}}\norm{G-\hat{G}}_2,
\]

\subsection*{Step 2: Bounds for system parameters}

\textbf{Bounds for $C,K$}

Since $\hat{C}-\bar{C}T$ is a sub-matrix of $\hat{\O}_f-\bar{\O}_fT$, we immediately obtain:
\[
\norm{\hat{C}-\bar{C}T}_2 \le \norm{\hat{\O}_f-\bar{\O}_fT}_2\le 2\sqrt{\frac{10n}{\sigma_n\paren{G}}}\norm{G-\hat{G}}_2.
\] 
Similarly, for $K$ we have:
\[
\norm{\hat{K}-T^*\bar{K}}_2 \le \norm{\hat{\K}_p-T^*\bar{\K}_p}_2\le 2\sqrt{\frac{10n}{\sigma_n\paren{G}}}\norm{G-\hat{G}}_2
\] 
\textbf{Bounds for $A$}

For simplicity, denote $\hat{M}\triangleq \hat{\O}^h_f$, $\bar{M}\triangleq \bar{\O}^h_f$ and $\hat{N}\triangleq \hat{\O}^l_f$, $\bar{N}\triangleq \bar{\O}^l_f$. Based on this notation:
\[
\hat{A}=\hat{M}^{\dagger}\hat{N},\quad \bar{A}=\bar{M}^{\dagger}\bar{N}.
\]
After some algebraic manipulations:
\begin{equation*}
\hat{A}-T^*\bar{A}T=\paren{\hat{M}^{\dagger}-T^*\bar{M}^{\dagger}}\bar{N}T+\hat{M}^{\dagger}\paren{\hat{N}-\bar{N}T}.
\end{equation*}
First, notice that $\norm{\bar{N}}_2\le \norm{\bar{\O}_f}_2=\sqrt{\norm{G}}$, where the inequality follows from the fact that $\bar{N}$ is a submatrix of $\bar{\O}_f$; equality follows from the definition of $\bar{\O}_f=U_1\Sigma_1^{1/2}$. Second, $\norm{\hat{M}^{\dagger}}_2=\frac{1}{\sigma_n\paren{\hat{M}}}\le \frac{1}{\sigma_o}$ and third 
\[
\norm{\hat{N}-\bar{N}T}_2\le \norm{\hat{\O}_f-\bar{\O}_fT}_2,
\]
since $\hat{N}-\bar{N}T$ is a submatrix of $\hat{\O}_f-\bar{\O}_fT$.
Finally, from Theorem~4.1 of~\cite{wedin1972perturbation}
\[
\norm{\hat{M}^{\dagger}-T^*\bar{M}^{\dagger}}_2\le \norm{\hat{M}-T^*\bar{M}}_2 \max\set{\frac{1}{\sigma^2_n\paren{\hat{M}}},\frac{1}{\sigma^2_n\paren{\bar{M}}}}\le \norm{\hat{\O}_f-\bar{\O}_fT}_2 \frac{1}{\sigma^2_o}.
\]
Combining all previous bounds, we obtain
\begin{equation*}
\norm{\hat{A}-T^*\bar{A}T}_2\le \paren{\frac{\sqrt{G}}{\sigma^2_o}+\frac{1}{\sigma_o}}\norm{\hat{\O}_f-\bar{\O}_fT}_2.
\end{equation*}

\section{Bounds for system matrices}
In this section, we formally prove bounds for $\O_k,\Tm_k,\Gamma_k$, which we implicitly used in Section~\ref{Section_Analysis_Hankel}.

First, we prove a standard result for the norm of (block) Toeplitz matrices.
\begin{lemma}[Toeplitz norm]\label{ALG_LEM_TOEPLITZ}
	Let $M\in \REAL^{m_1n\times m_2n}$, for some integers $n,m_1,m_2$ be an (upper) block triangular Toeplitz matrix:
	\[
	M=\matr{{cccccc}M_1&M_2&M_3&\cdots&\cdots&M_n\\0&M_1&M_2& & &M_{n-1}\\\vdots & & \ddots & \ddots& &\vdots\\ \\ \\ & & & &M_1&M_2\\0&0&\cdots& &0&M_1},
	\]
	where $M_{i}\in \REAL^{m_1\times m_2}$, $i=1,\dots,n$.
	Then:
	\[
	\norm{M}_2\le \sum_{i=1}^{n}\norm{M_i}_2
	\]
\end{lemma}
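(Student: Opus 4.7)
The plan is to decompose $M$ as a sum of single-diagonal block matrices and apply the triangle inequality for the spectral norm. For each $i=1,\dots,n$, let $S_i$ denote the block upper-triangular Toeplitz matrix that has $M_i$ on its $(i-1)$-th block super-diagonal and zero blocks everywhere else. By construction, $M = \sum_{i=1}^{n} S_i$, so by subadditivity of $\norm{\cdot}_2$ it is enough to prove that $\norm{S_i}_2 \le \norm{M_i}_2$ for every $i$.

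To establish this component bound, I would compute $S_i x$ directly for an arbitrary $x = (x_1^*,\dots,x_n^*)^* \in \REAL^{m_2 n}$ with blocks $x_j \in \REAL^{m_2}$. By the definition of $S_i$, the $j$-th block of $S_i x$ equals $M_i x_{j+i-1}$ for $j = 1,\dots,n-i+1$, and is zero for the remaining indices. Therefore
\[
\norm{S_i x}_2^2 = \sum_{j=1}^{n-i+1} \norm{M_i x_{j+i-1}}_2^2 \le \norm{M_i}_2^2 \sum_{j=1}^{n-i+1} \norm{x_{j+i-1}}_2^2 \le \norm{M_i}_2^2 \norm{x}_2^2,
\]
which yields $\norm{S_i}_2 \le \norm{M_i}_2$ after taking square roots and the supremum over unit-norm $x$. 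Combining this with the triangle inequality gives
\[
\norm{M}_2 \le \sum_{i=1}^{n} \norm{S_i}_2 \le \sum_{i=1}^{n} \norm{M_i}_2,
\]
as claimed.

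There is no real obstacle here; the argument is essentially bookkeeping about which block of $x$ is multiplied by $M_i$ in each row of $S_i$. The only mild care needed is to keep the row/column block indices consistent with the author's convention that $M_i$ sits on the $(i-1)$-th super-diagonal (so that $M_1$ is the main diagonal block). Once that indexing is fixed, the inequality $\sum_{j} \norm{x_{j+i-1}}_2^2 \le \norm{x}_2^2$ is immediate from the orthogonal decomposition of $\norm{x}_2^2$ into block-norms.
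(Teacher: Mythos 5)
Your proof is correct, and it rests on the same decomposition as the paper's: writing $M$ as a sum of single-superdiagonal block Toeplitz matrices and applying the triangle inequality. Your $S_i$ is exactly $J_n^{i-1}\otimes M_i$, where $J_n$ is the $n\times n$ nilpotent shift, so the only genuine difference lies in how the per-diagonal bound $\norm{S_i}_2\le\norm{M_i}_2$ is justified. The paper gets it from the Kronecker-product identity $\norm{D\otimes F}_2=\norm{D}_2\norm{F}_2$ together with $\norm{J_n}_2=1$ (citing Horn and Johnson), whereas you verify it by direct block bookkeeping: computing $\norm{S_i x}_2^2$ blockwise and using $\sum_{k=i}^{n}\norm{x_k}_2^2\le\norm{x}_2^2$. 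Your route is more elementary and self-contained, replacing an external theorem with a three-line computation; the paper's route is slightly slicker and generalizes immediately to any coefficient pattern of the form $\sum_i D_i\otimes M_i$ with $\norm{D_i}_2\le 1$. Your index bookkeeping (blocks $j=1,\dots,n-i+1$ mapping to $x_{j+i-1}$) is consistent with the convention that $M_1$ sits on the main diagonal, so there is no gap.
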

\begin{proof}
	A standard technique in the analysis of Toeplitz matrices is to write them in terms of linear combinations of powers of companion matrices (see~\cite{horn2012matrix}, equation (0.9.7)).
	We can write $M$ as:
	\[
	M=I_n\otimes M_1+\sum_{i=1}^{n-1}J_n^{i}\otimes M_i
	\]
	where $\otimes$ is the Kronecker product, $I_n$ is the identity matrix of dimension $n$, and $J_n$ is the companion matrix:
	\[
	J_n=
	J=\matr{{ccccc}0&1&\cdots&0&0\\0&0&\ddots &\vdots &\vdots\\& & \ddots \\ & & &0&1\\0&0&\cdots &0&0},
	\]
	But we have $\norm{D\otimes F}_2=\norm{D}_2\norm{F}_2$ for any matrices $D,F$ (see Theorem 4.2.15 in~\cite{horn1994topics}). Also the companion matrix has norm one $\norm{J_n}_2=1$ since $J_nJ_n^*=\diag{1,\dots,1,0}$. The result follows from the triangle inequality.
\end{proof}

Next, we provide a bound for the sum of powers of $A$.
\begin{lemma}
Consider the series $S_t=\sum_{i=0}^{t}\norm{A^{i}}_2$. We have the following two cases:
\begin{itemize}
		\item If the system is asymptotically stable $\rho(A)<1$, then $\norm{S_t}_2=\O\paren{1}$
	\item If the system is marginally stable ($\rho(A)= 1$), then $\norm{S_t}_2=\O\paren{t^{\kappa}}$, where $\kappa$ is the largest Jordan block of $A$ corresponding to a unit circle eigenvalue $\abs{\lambda}=1$.
\end{itemize}
\end{lemma}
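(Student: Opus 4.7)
The plan is to reduce the problem to the Jordan canonical form of $A$ and analyze the growth of powers of individual Jordan blocks. Write $A = PJP^{-1}$ where $J = \mathrm{diag}\paren{J_1,\dots,J_r}$ with each $J_s$ a Jordan block of size $k_s$ with eigenvalue $\lambda_s$. Then $A^i = PJ^iP^{-1}$, so
\[
\norm{A^i}_2 \le \norm{P}_2\snorm{P^{-1}}_2\,\norm{J^i}_2 = \norm{P}_2\snorm{P^{-1}}_2\,\max_s \snorm{J_s^i}_2.
\]
Since $\|P\|_2\|P^{-1}\|_2$ is a constant depending only on $A$, it suffices to bound each $\snorm{J_s^i}_2$.

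Using the decomposition $J_s = \lambda_s I_{k_s} + N_s$ with $N_s$ the nilpotent shift, the binomial theorem gives
\[
J_s^i = \sum_{j=0}^{k_s-1}\binom{i}{j}\lambda_s^{i-j}N_s^j,
\]
whence $\snorm{J_s^i}_2 \le \sum_{j=0}^{k_s-1}\binom{i}{j}|\lambda_s|^{i-j}$. The dominant term for large $i$ is of order $i^{k_s-1}|\lambda_s|^{i-k_s+1}$.

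For Case 1 ($\rho(A)<1$), pick any $\rho'$ with $\rho(A)<\rho'<1$. Since $\binom{i}{j}|\lambda_s|^{i-j}$ grows polynomially in $i$ while $|\lambda_s|<\rho'$, we have $\snorm{J_s^i}_2 \le C (\rho')^i$ for some constant $C=C(A)$ and every $s$. Summing the geometric series yields $S_t \le C/(1-\rho')$, which is $\O(1)$.

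For Case 2 ($\rho(A)=1$), split the Jordan blocks into those with $|\lambda_s|=1$ and those with $|\lambda_s|<1$. Blocks with $|\lambda_s|<1$ again contribute an $\O(1)$ term to $S_t$ by the geometric argument above. For blocks with $|\lambda_s|=1$, we get $\snorm{J_s^i}_2 \le \sum_{j=0}^{k_s-1}\binom{i}{j} = \O\paren{i^{k_s-1}}$. Taking the maximum over $s$ and letting $\kappa = \max\set{k_s:|\lambda_s|=1}$, we obtain $\norm{A^i}_2 = \O\paren{i^{\kappa-1}}$, and
\[
S_t = \sum_{i=0}^{t}\O\paren{i^{\kappa-1}} = \O\paren{t^\kappa},
\]
where we used $\sum_{i=0}^{t} i^{\kappa-1} \le t^{\kappa}/\kappa + \text{lower order}$. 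The only mildly delicate step is bookkeeping the constants hidden in the $\O(\cdot)$ notation when combining the polynomial-growth blocks with the geometrically decaying ones, but there is no real obstacle — the Jordan form diagonalizes the analysis cleanly.
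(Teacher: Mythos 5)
Your proof is correct and follows essentially the same route as the paper: the heart of the marginally stable case is, in both, the binomial expansion of powers of a Jordan block and the bound $\norm{A^i}_2=\O\paren{i^{\kappa-1}}$ obtained by summing binomial coefficients, followed by summation over $i$ to get $\O\paren{t^{\kappa}}$. The only cosmetic differences are that the paper invokes Gelfand's formula for the asymptotically stable case where you reuse the Jordan decomposition, and that the paper writes out only the single-Jordan-block case explicitly while you carry the full block-diagonal bookkeeping; both choices are fine.
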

\begin{proof}
	
	\textbf{Proof of first part.}
	By Gelfand's formula~\cite{horn2012matrix}, for every $\epsilon>0$, there exists a $i_0=i_0(\epsilon)$ such that $\norm{A^i}\le (\rho(A)+\epsilon)^i$, for all $i\ge i_0$.
	Just pick $\epsilon$ such that $\rho\paren{A}+\epsilon<1$. Then,
	\[
	S_t\le \sum_{i=0}^{i_0} \norm{A^i}_2+\frac{1}{1-\rho(A)-\epsilon}=\O\paren{1}.
	\] 
	
\textbf{Proof of second part.}
Assume that $A$ is equal to a $n\times n$ Jordan block corresponding to $\lambda=1$. The proof for the other cases is similar.
Then we have that:
\[
A^{i}=\matr{{cccc}1& {i }\choose {1}&\dots& {i} \choose{ n-1}\\0&1&\dots&{i} \choose{ n-2}\\ & &\ddots &\\0&0&\dots&1}
\]
By Lemma~\ref{ALG_LEM_TOEPLITZ}, we obtain:
\[
\norm{A^{i}}_2 \le \sum_{k=0}^{n-1} {{i} \choose{ k}}\le \paren{\frac{e i}{n-1}}^{n-1}
\]
where the second inequality is classical, see Exercise~0.0.5 in~\cite{vershynin2018high}.

Finally, we have:
\[
S_t\le t \paren{\frac{e t}{n-1}}^{n-1}=\O\paren{t^n}
\]
\end{proof}
\begin{corollary}\label{ALG_COR_System_Upper_Bound}
The norms $\norm{\O_k}_2$, $\norm{\Tm_k}_2$, $\norm{\Gamma_k}_2$ depend at most polynomially in $k$ if $\rho\paren{A}\le 1$ (they are $\O\paren{\text{poly}(k)}$). If the system is asymptotically stable they are upper bounded for all $k$ (they are $\O\paren{1}$).
\end{corollary}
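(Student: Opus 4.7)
The plan is to express each of the three quantities as an explicit function of the powers $A^i$ and then invoke the preceding lemma on $S_t = \sum_{i=0}^{t}\snorm{A^i}_2$. Once each quantity is written in closed form, the upper bound is immediate from the dichotomy $S_t = \O(1)$ under $\rho(A)<1$ versus $S_t = \O(t^\kappa)$ under $\rho(A)=1$.

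For $\O_k$, I would use the block-column/row structure to pass from spectral norm to a sum of block norms. Specifically, since $\O_k$ is a vertical stack of $k$ blocks $CA^i$, we have
\[
\norm{\O_k}_2^2 \le \sum_{i=0}^{k-1}\norm{CA^i}_2^2 \le \norm{C}_2^2 \sum_{i=0}^{k-1}\norm{A^i}_2^2.
\]
For $\Tm_k$, the matrix is block lower-triangular Toeplitz with blocks $I_m$ on the diagonal and $CA^{i-1}K$ on the $i$-th subdiagonal ($i\ge 1$), so Lemma~\ref{ALG_LEM_TOEPLITZ} (applied to the transpose, which is upper triangular) gives
\[
\norm{\Tm_k}_2 \le 1 + \norm{C}_2\norm{K}_2 \sum_{i=0}^{k-2}\norm{A^i}_2.
\]
For $\Gamma_k$, I would unroll the Lyapunov recursion $\Gamma_{k}=A\Gamma_{k-1}A^*+\bar{Q}$ with $\Gamma_0=0$ and $\bar{Q}=K\bar{R}K^*$, obtaining $\Gamma_k=\sum_{i=0}^{k-1}A^i\bar{Q}(A^*)^i$, hence
\[
\norm{\Gamma_k}_2 \le \norm{\bar{Q}}_2 \sum_{i=0}^{k-1}\norm{A^i}_2^2.
\]

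The conclusion then follows by observing that if $\rho(A)\le 1$ the preceding lemma yields $\norm{A^i}_2 = \O(i^{\kappa-1})$ for some fixed $\kappa$ (where $\kappa$ is the size of the largest Jordan block on the unit circle), so that $\sum_{i=0}^{k-1}\norm{A^i}_2$ and $\sum_{i=0}^{k-1}\norm{A^i}_2^2$ are both polynomial in $k$; and if $\rho(A)<1$, Gelfand's formula argument from the preceding lemma immediately gives $\sum_{i=0}^{\infty}\norm{A^i}_2^2<\infty$ as well as $\sum_{i=0}^{\infty}\norm{A^i}_2<\infty$, so all three quantities are $\O(1)$ uniformly in $k$.

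The only mild subtlety is that for $\O_k$ and $\Gamma_k$ the natural bound involves $\sum \norm{A^i}_2^2$ rather than $\sum \norm{A^i}_2$ as in the preceding lemma, but this does not alter the conclusion: the same Gelfand-type argument gives a convergent geometric bound in the asymptotically stable case, and the polynomial bound $\norm{A^i}_2 \le C i^{\kappa-1}$ in the marginally stable case yields $\sum_{i=0}^{k-1}\norm{A^i}_2^2 = \O(k^{2\kappa-1})$, which is still polynomial. I do not expect any serious obstacle; the proof is essentially a direct bookkeeping application of the preceding lemma.
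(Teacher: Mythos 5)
Your proof is correct and follows essentially the same route as the paper: bound each of $\norm{\O_k}_2$, $\norm{\Tm_k}_2$, $\norm{\Gamma_k}_2$ by $\sum_i\norm{A^i}_2$ or $\sum_i\norm{A^i}_2^2$ (via the block structure, the Toeplitz lemma, and the unrolled Lyapunov recursion, respectively) and then invoke the preceding lemma's dichotomy. The only cosmetic difference is that you use the root-sum-of-squares bound for $\O_k$ where the paper uses the cruder $\norm{\O_k}_2\le\norm{C}_2 S_{k-1}$, and your handling of the $\sum_i\norm{A^i}_2^2$ variant is exactly the ``similar argument'' the paper gestures at for $\Gamma_k$.
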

\begin{proof}

	\textbf{Observability matrix:}
		Consider the sum $S_t=\sum_{i=0}^{t}\norm{A^{i}}_2$ of the previous lemma.
	We have:
	\[
	\norm{\O_k}_2\le \norm{C}_2 S_{k-1}=\O\paren{S_{k}}.
	\]
	The result follows by applying the previous lemma.	
	
	\textbf{Block Toeplitz matrix}
		Consider the sum $S_t=\sum_{i=0}^{t}\norm{A^{i}}_2$ of the previous lemma.
	By Lemma~\ref{ALG_LEM_TOEPLITZ} we have:
	\[
	\norm{\Tm_k}_2\le 1+\norm{C}_2\norm{K}_2 S_{k-2}=\O\paren{S_{k}}.
	\]
	The result follows by applying the previous lemma.
	
	\textbf{Covariance matrix}
	We have
	\[
	\norm{\Gamma_k}_2\le \norm{K\bar{R}K^*}_2\sum_{i=0}^{k-1}\norm{A^i}^2.
	\]
	The result follows by using a similar argument as in the previous lemma for $\sum_{i=0}^{k-1}\norm{A^i}^2$.
\end{proof}
\begin{lemma}[Lest non-zero singular values of $G$, $\O_p$ are increasing]\label{ALG_LEM_Singular_Value_Lower_Bound}
The $n-$th singular value of $\O_p$ is increasing with $p$:
\[
\sigma_{n}\paren{\O_{p_1}}\ge \sigma_{n}\paren{\O_{p_2}},\text{ for }p_1\ge p_2.
\]
The same is true for the $n-$th singular value of $G$:
\[
\sigma_{n}\paren{\O_{f}\K_{p_1}}\ge \sigma_{n}\paren{\O_{f}\K_{p_2}},\text{ for }p_1\ge p_2.
\]
\end{lemma}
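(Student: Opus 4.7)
The plan is to reduce both statements to standard submatrix / Courant--Fischer arguments. The key structural observation is that increasing $p$ only \emph{adds} rows to $\O_p$ (at the bottom) and \emph{adds} columns to $\K_p$ (at the left), so in both cases we are comparing a matrix with a submatrix, and adding rows/columns cannot shrink the $n$-th singular value.

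For the first statement, I would note that when $p_1 \ge p_2$, the matrix $\O_{p_2}$ consists exactly of the top $mp_2$ rows of $\O_{p_1}$. For any unit vector $x \in \REAL^n$,
\[
\snorm{\O_{p_1} x}_2^2 \;=\; \sum_{i=0}^{p_1-1}\snorm{CA^i x}_2^2 \;\ge\; \sum_{i=0}^{p_2-1}\snorm{CA^i x}_2^2 \;=\; \snorm{\O_{p_2} x}_2^2.
\]
Since both matrices have $n$ columns and their $n$-th singular value equals the minimum of $\snorm{\O_p x}_2$ over the unit sphere in $\REAL^n$ (Courant--Fischer, noting that under Assumption~\ref{ASS_Kalman} observability makes $\O_p$ full column rank for $p\ge n$), the pointwise inequality above transfers directly to $\sigma_n(\O_{p_1}) \ge \sigma_n(\O_{p_2})$.

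For the second statement, I would use the dual observation: when $p_1 \ge p_2$, the matrix $\K_{p_2}$ is exactly the block of the \emph{rightmost} $mp_2$ columns of $\K_{p_1}$, because the rightmost entries of both $\K_p$'s are $K$, then $(A-KC)K$, and so on. Consequently $G_{p_2} = \O_f\K_{p_2}$ is a column block of $G_{p_1} = \O_f\K_{p_1}$, so we can write $G_{p_1} = \matr{{cc} M & G_{p_2}}$ for some block $M$. Then
\[
G_{p_1}G_{p_1}^* \;=\; MM^* + G_{p_2} G_{p_2}^* \;\succeq\; G_{p_2} G_{p_2}^*.
\]
By the Weyl monotonicity principle for eigenvalues of Hermitian matrices, $\lambda_n\bigl(G_{p_1}G_{p_1}^*\bigr) \ge \lambda_n\bigl(G_{p_2}G_{p_2}^*\bigr)$, and taking square roots yields $\sigma_n(G_{p_1}) \ge \sigma_n(G_{p_2})$.

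I do not expect a real obstacle here; the result is essentially a one-line consequence of Courant--Fischer once the correct submatrix relationship (rows for $\O_p$, columns for $\K_p$) is identified. The only mild care needed is to verify that $\sigma_n$ denotes the smallest non-zero singular value in both cases and that the relevant matrices have rank $n$ under Assumptions~\ref{ASS_Kalman} and~\ref{ASS_Subspace}, which guarantees that the min-max formula for $\sigma_n$ is the appropriate tool.
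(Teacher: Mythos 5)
Your proposal is correct and follows essentially the same route as the paper: identify $\O_{p_2}$ as the top row block of $\O_{p_1}$ (and $\K_{p_2}$ as the rightmost column block of $\K_{p_1}$), deduce $\O_{p_1}^*\O_{p_1}\succeq\O_{p_2}^*\O_{p_2}$ (resp.\ $G_{p_1}G_{p_1}^*\succeq G_{p_2}G_{p_2}^*$), and conclude by eigenvalue monotonicity. The only difference is that you explicitly write out the second part, which the paper dismisses as ``similar,'' and your column-block identification of $\K_{p_2}$ inside $\K_{p_1}$ is the correct way to do it.
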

\begin{proof}
	We only prove the first part. The other proof is similar.
	Notice that $\O_{p_1}$ can be rewritten as:
	\[
	\O_{p_1}=\matr{{c}\O_{p_2}\\M},
	\]
	for some matrix $M$.
Hence
\[
\O^*_{p_1}\O_{p_1}\succeq \O^*_{p_2}\O_{p_2}.
\]
Thus $\sigma^2_{n}\paren{\O_{p_1}}\ge \sigma^2_{n}\paren{\O_{p_2}}$.
\end{proof}


\end{document}